\documentclass{article}
\PassOptionsToPackage{numbers,compress}{natbib}
\usepackage[preprint]{neurips_2026}

\usepackage[utf8]{inputenc} 
\usepackage[T1]{fontenc}    
\usepackage{hyperref}       
\usepackage{url}            
\usepackage{booktabs}       
\usepackage{amsfonts}       
\usepackage{nicefrac}       
\usepackage{microtype}      
\usepackage[dvipsnames,table]{xcolor}         

\usepackage{subcaption}
\usepackage{algorithm}
\usepackage{algpseudocodex} 
\usepackage{amsthm}
\usepackage{mathtools} 
\usepackage{enumitem}
\usepackage{multirow}
\usepackage{wrapfig}

\usepackage{comment}
\newcommand{\method}{{\textsc{DiSPO}}}
\usepackage[dvipsnames,table]{xcolor}

\newcommand{\action}[1]{\textbf{\textcolor{JungleGreen}{#1}}}
\newcommand{\state}[1]{\textbf{\textcolor{Orange}{#1}}}

\usepackage{tikz}
\usetikzlibrary{tikzmark}
\usepackage[most]{tcolorbox}
\usepackage{url}

\usepackage{multirow} 

\newtcbox{\roundboxgray}[1][gray!18]{
  on line,
  colback=#1,
  colframe=#1,
  boxrule=0pt,
  arc=3pt,
  boxsep=2pt,
  left=0pt, right=0pt, top=0pt, bottom=0pt
}

\newtcbox{\roundboxblue}[1][blue!10]{
  on line,
  colback=#1,
  colframe=#1,
  boxrule=0pt,
  arc=3pt,
  boxsep=1pt,
  left=0pt, right=0pt, top=0pt, bottom=0pt
}

\newtcbox{\roundboxorange}[1][orange!10]{
  on line,
  colback=#1,
  colframe=#1,
  boxrule=0pt,
  arc=3pt,
  boxsep=1pt,
  left=0pt, right=0pt, top=0pt, bottom=0pt
}

\newtheorem{theorem}{Theorem}[section]

\newtheorem{proposition}[theorem]{Proposition}

\hypersetup{
  colorlinks=true,
  linkcolor=red,
  citecolor=teal,
  urlcolor=magenta
}
\usepackage{mdframed}

\surroundwithmdframed[
  backgroundcolor=gray!5,
  linecolor=gray!50,
  linewidth=0.5pt,
  innertopmargin=6pt,
  innerbottommargin=6pt
]{proposition}

\surroundwithmdframed[
  backgroundcolor=gray!5,
  linecolor=gray!50,
  linewidth=0.5pt,
  innertopmargin=6pt,
  innerbottommargin=6pt
]{corollary}

\surroundwithmdframed[
  backgroundcolor=gray!5,
  linecolor=gray!50,
  linewidth=0.5pt,
  innertopmargin=6pt,
  innerbottommargin=6pt
]{theorem}

\title{Diffusion-State Policy Optimization for \\Masked Diffusion Language Models}

\author{%
  Daisuke Oba\textsuperscript{1} \quad
  Hiroki Furuta \quad
  Naoaki Okazaki\textsuperscript{1}\\
  \textsuperscript{1}Institute of Science Tokyo\\
  \textsuperscript{1}\texttt{\{daisuke.oba@nlp.,okazaki@\}comp.isct.ac.jp}
}

\begin{document}

\maketitle

\begin{abstract}
Masked diffusion language models generate text through iterative masked-token filling, but terminal-only rewards on final completions provide coarse credit assignment for the intermediate filling decisions that shape the generation process.
We propose Diffusion-State Policy Optimization (\method), a plug-in credit-assignment layer that directly optimizes intermediate filling decisions.
At selected intermediate masked states, \method\ branches by resampling the currently masked positions from rollout-cached logits, scores the resulting completions, and updates only the newly filled tokens, requiring no additional multi-step diffusion rollouts or optimizer steps.
We formalize a fixed-state objective for branched completions and derive a policy-gradient estimator that reuses the same rollouts as terminal-feedback policy optimization.
Experiments on LLaDA-8B-Instruct show that \method\ consistently improves terminal-feedback baselines, including diffu-GRPO and SPG, on math and planning benchmarks under matched rollout compute and optimizer steps, supporting its use as a general plug-in for masked diffusion policy optimization.
\end{abstract}

\section{Introduction}\label{sec:intro}
Policy optimization (PO) is widely used to improve the reasoning and alignment behavior of language models, from preference-based RLHF~\citep{ziegler2019fine,stiennon2020summarize,ouyang2022training} to policy-gradient methods~\citep{williams1992simple,schulman2015trust,schulman2017proximal} and recent reasoning-oriented variants~\citep{shao2024deepseekmath}.
Most PO methods, however, learn from scalar rewards on final completions, effectively assigning one sequence-level outcome to many local generation decisions.
This coarse credit assignment motivates our central question: \emph{can PO exploit intermediate information already produced during generation, rather than only the final outcome?}

This question is particularly natural for masked diffusion language models (MDLMs), which generate text by repeatedly filling masked positions over multiple denoising steps~\citep{austin2021structured,sahoo2024simple,shi2024simplified,nie2025largelanguagediffusionmodels,ye2025dream,gong2025diffucoder}.
During rollout, each denoising step already produces token-level distributions over many masked positions, but terminal-feedback PO trains only from a scalar final-completion reward, leaving this intermediate state-action information largely unused for fine-grained credit assignment~\citep{zhao2025d1,spg}.
Prior work goes beyond purely terminal feedback, e.g., via reward shaping based on intermediate states or trajectory-level comparisons at fixed steps~\citep{sapo,mdpo}, densifying supervision over realized denoising trajectories.
In contrast, we make the state-conditioned filling action the unit of optimization: under the fixed intermediate masked sequence, we compare alternative assignments to the current masks.

To exploit intermediate rollout information for state-conditioned credit assignment, we propose \emph{Diffusion-State Policy Optimization} (\method), a plug-in objective for MDLM policy optimization (Fig.~\ref{fig:concept}).
Motivated by policy-gradient learning~\citep{williams1992simple,sutton1999policy}, \method\ treats an intermediate masked sequence as the \state{state} and its mask fillings as the \action{action}.
At selected states, it fixes the masked sequence, samples alternative fillings from rollout-cached logits, scores the resulting branched completions with the base terminal reward function, and updates only the newly filled tokens.
Since \method\ only requires intermediate masked states and masked-position logits already produced during rollout, it adds same-state training comparisons without additional multi-step diffusion rollouts or optimizer updates.
Thus, \method\ can be plugged into terminal-feedback MDLM PO methods as a credit-assignment objective (Alg.~\ref{alg:dispo}).

We formalize \method\ as a fixed-state expected-return objective over branched completions and derive a policy-gradient estimator for state-conditioned mask filling (Theorem~\ref{thm:step-pg}).
This objective can be mixed with terminal-feedback PO using the same rollouts, giving a joint objective in expectation and supporting \method\ as a plug-in objective (Theorem~\ref{thm:mixed}).
Our analysis further motivates updating only newly filled tokens and averaging over same-state branches, which reduce variance and improve the efficiency of the step-wise update (Propositions~\ref{prop:var} and~\ref{prop:compute}).

We evaluate \method\ on LLaDA-8B-Instruct~\citep{nie2025largelanguagediffusionmodels} by integrating it with two terminal-feedback MDLM policy-optimization methods, diffu-GRPO~\citep{zhao2025d1} and SPG~\citep{spg}.
Under matched dominant training costs---multi-step diffusion rollouts and optimizer updates---\method\ consistently improves both base optimizers on math reasoning (GSM8K~\citep{gsm8k}, MATH500~\citep{lightman2023lets}) and symbolic planning (Sudoku, Countdown~\citep{zhao2025d1}).
Ablations attribute these gains to same-state branching and newly filled-token updates, confirming that the improvements come from the state-conditioned credit assignment (\S~\ref{sec:ablation}).

\begin{figure}[t]
    \centering
    \includegraphics[width=0.99\textwidth]{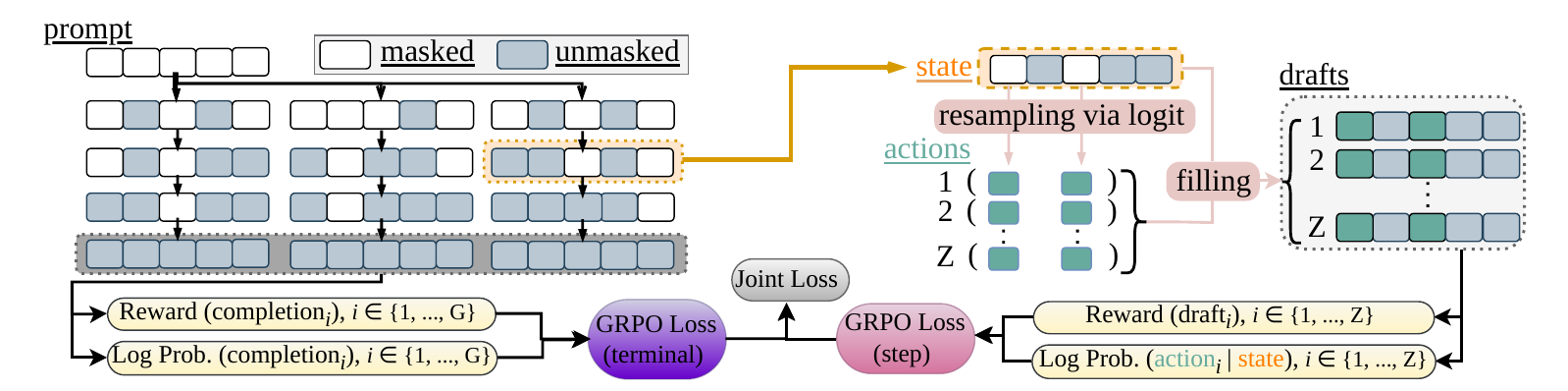}
    \caption{\textbf{Conceptual overview.}
    \textbf{\emph{Left}}: Terminal-feedback PO assigns reward to the final denoising trajectory.
    \textbf{\emph{Right}}: DiSPO branches from \state{intermediate masked states} using cached logits, scores the branched completions with the same reward, and updates only \action{newly filled tokens.}}
    \label{fig:concept}
\end{figure}

\section{Preliminaries}\label{sec:background}

\subsection{Terminal-feedback policy optimization for LMs}
\label{sec:bg:grpo}
We view a language model as a conditional policy $\pi_\theta(o\mid q)$ over completions $o$ given a prompt $q$.
In \emph{terminal-feedback} policy optimization, a scalar reward $R(q,o)$ is evaluated on the \emph{final} completion, and learning updates are driven by likelihood-ratio (policy-gradient) objectives at the \emph{sequence level}.
This bandit-style view is broadly applicable, but it attributes the same terminal signal to all decoding decisions, yielding coarse credit assignment.

A common instantiation is \emph{group-based} policy optimization~\citep{shao2024deepseekmath}.
Given $G$ sampled completions $o_1,\dots,o_G \sim \pi_{\theta_{\text{old}}}(\cdot\mid q)$ and terminal rewards $r_i = R(q,o_i)$, define the group baseline
$\bar r = \frac{1}{G}\sum_{i=1}^G r_i$
and advantages $A_i = r_i - \bar r$.
A concise likelihood-ratio objective (optionally with a KL penalty to a reference policy $\pi_{\text{ref}}$) is
\[
\mathbb{E}_{q}
\left[
\frac{1}{G}\sum_{i=1}^G
\underbrace{\frac{\pi_\theta(o_i\mid q)}{\pi_{\theta_{\text{old}}}(o_i\mid q)}}_{\rho_i(\theta)}
\,A_i
\!-\!\beta\,\mathrm{KL}\!\left(\pi_\theta(\cdot\mid q)\,\|\,\pi_{\text{ref}}(\cdot\mid q)\right)\right].
\]
We use this terminal-feedback formulation as a convenient baseline and as a plug-in component for our method.

\subsection{Surrogate likelihood for terminal-feedback optimization of MDLMs}
\label{sec:bg:diffu}
Masked diffusion language models (MDLMs) generate text by iteratively refining a partially-masked sequence, unmasking tokens in an arbitrary order.
This makes the exact sequence likelihood $\log \pi_\theta(o\!\mid\! q)$ intractable, which prevents directly applying standard terminal-feedback objectives such as \S~\ref{sec:bg:grpo}.
To enable \emph{terminal-feedback} policy optimization for MDLMs, \citet{zhao2025d1} introduces a tractable \emph{surrogate likelihood} $\tilde{\pi}_\theta$ (yielding an MDLM instantiation often referred to as diffu-GRPO).

Given a prompt--completion pair $(q,o)$, we sample a random prompt-masking pattern $m$ to obtain a corrupted prompt $q^{(m)}$.
We then form a fully masked completion $\tilde{o}$, run a single denoising step to obtain per-position token distributions
$p_\theta(\cdot \mid q^{(m)}, \tilde{o})$, and define the surrogate log-probability
\begin{equation}
  \log \tilde{\pi}_\theta(o\mid q)
\;=\;
\mathbb{E}_{m}\!\left[\;\sum_{j=1}^{|o|}
\log p_\theta\!\bigl(o_j \mid q^{(m)}, \tilde{o}\bigr)\;\right].
  \label{eq:global-surrogate}
\end{equation}
In terminal-feedback MDLM policy optimization, $\tilde{\pi}_\theta$ replaces $\pi_\theta$ inside likelihood ratios and KL terms, allowing MDLMs to be trained as conditional policies from scalar rewards on final completions.

\subsection{Rethinking MDLM decision points for RL}
\label{sec:bg:pg}
In an episodic MDP, the policy gradient theorem~\citep{williams1992simple,sutton1999policy} states that for a stochastic policy $\pi_\theta(a_t\mid s_t)$,
\[
\nabla_\theta J(\theta)
=
\mathbb{E}\!\left[\sum_t \nabla_\theta \log \pi_\theta(a_t\mid s_t)\, G_t\right],
\]
where $s_t$ and $a_t$ are the state and action, and $G_t$ denotes the return from step $t$.
Each decision $(s_t,a_t)$ contributes an explicit term
$\nabla_\theta \log \pi_\theta(a_t\!\mid\! s_t)\,G_t$.
This decomposition enables fine-grained credit assignment across steps.

Most RL formulations for LMs (autoregressive or diffusion) collapse decoding to a \emph{terminal-reward bandit}, where reward evaluation and gradient attribution are both driven by the final completion~\citep{shao2024deepseekmath,zhao2025d1}.
MDLMs, however, provide an anytime completion interface: at an intermediate masked state, a single forward pass yields logits for \emph{all} masked positions.
Hence we can view {the masked sequence} as the {state} $s_t$ and {its mask fillings} as the {action} $a_t$; filling deterministically yields a completion to score with the same terminal reward, and cached logits enable multiple same-state counterfactual completions without additional rollout forward passes.
This state--action view motivates our diffusion-state policy optimization in \S~\ref{sec:method}.

\section{Diffusion-State Policy Optimization}
\label{sec:method}
We present \emph{\textbf{Diffusion-State Policy Optimization}} (\textbf{\method}), a state-wise credit-assignment objective for MDLM policy optimization (Algo.~\ref{alg:dispo}).
\method\ branches at selected intermediate masked states using rollout-cached logits, scores the branched completions with the base terminal reward, and updates only newly filled tokens.
By reusing intermediate states and masked-position logits from rollout, \method\ adds same-state comparisons without additional multi-step diffusion rollouts and optimizer steps.
\emph{Importantly}, \method\ defines only the state-wise objective and treats the terminal-feedback PO objective as a replaceable base optimizer; for clarity, we first describe \method\ independently of the terminal objective and then instantiate the combined with diffu-GRPO~\citep{zhao2025d1}.

\subsection{RL Formulation}
\label{sec:method-states}

\textbf{{States.}}
For a prompt $q \!\sim\! \mathcal{D}$ and behavior parameters $\theta_{\text{old}}$, a multi-step MDLM sampler produces a trajectory of partially-masked sequences
$x_{k,1},x_{k,2},\dots,x_{k,T}$, where $T$ is the number of denoising steps and $k$ indexes trajectories.
Each $x_{k,t}\!\in\!(\mathcal{V}\!\cup\!\{\texttt{[MASK]}\})^{L}$ is a length-$L$ sequence with a subset of positions masked.
Let
\[
  M_{k,t}=\{i:\, x_{k,t,i}=\texttt{[MASK]}\},\quad
  U_{k,t}=[L]\setminus M_{k,t},
\]
and define {the diffusion state} at step $t$ as $s_{k,t}=(q,x_{k,t})$.

\textbf{{Actions} and policy factorization.}
At state $s_{k,t}$, the MDLM outputs logits for all $i\in M_{k,t}$, inducing per-position token distributions $\pi_\theta(\cdot\!\mid\! s_{k,t},i)$.
We define the {action} as the joint filling of the currently masked positions,
$a_{k,t}=\{a_{k,t,i}\}_{i\in M_{k,t}}$, with the factorized policy
\begin{equation}
  \pi_\theta(a_{k,t}\mid s_{k,t})
  \;=\;
  \prod_{i\in M_{k,t}}
  \pi_\theta\bigl(a_{k,t,i}\mid s_{k,t},i\bigr).
  \label{eq:one-step-policy}
\end{equation}
Applying an action yields a completion deterministically:
$o=\textsc{Fill}(x_{k,t},a_{k,t})$, which replaces $x_{k,t,i}$ by $a_{k,t,i}$ for $i\in M_{k,t}$ and keeps $U_{k,t}$ fixed.
This state--action view makes it natural to evaluate intermediate decisions using the same terminal reward $R(q,o)$.
Moreover, DiSPO attributes gradients only to the action distribution $\pi_\theta(a_{k,t}\mid s_{k,t})$, i.e., the newly filled tokens on $M_{k,t}$, treating previously filled tokens as part of the state.
In practice, likelihood ratios and KL terms are computed with a tractable masked-token surrogate introduced in \S~\ref{sec:method-surrogate}.

\subsection{Per-state Branching and Rewards}
\label{sec:method-branch}

For each prompt $q\!\sim\!\mathcal{D}$, we first sample $K$ behavior trajectories by running the multi-step MDLM sampler with frozen parameters $\theta_{\text{old}}$,
obtaining $\{x_{k,1:T}\}_{k=1}^K$ and terminal completions $o_k\!=\!x_{k,T}$.
During rollout, we cache the logits for each masked position $i\!\in\! M_{k,t}$ at every visited state $s_{k,t}$.

We then select a set of state indices $\mathcal{S}\subseteq\{(k,t)\}$ to update.
For each selected $(k,t)\in\mathcal{S}$, we form a \emph{state-conditioned} group by branching from the state:
using cached logits at $s_{k,t}$, we resample $Z$ actions
\[
  a_{k,t,1},\dots,a_{k,t,Z}
  \!\sim\!
  \pi_{\theta_{\text{old}}}(\cdot\!\mid\! s_{k,t})
  \!=\! \prod_{i\in M_{k,t}} \!\pi_{\theta_{\text{old}}}(\cdot\!\mid\! s_{k,t},i),
\]
and obtain completions $o_{k,t,z}\!=\!\textsc{Fill}(x_{k,t},a_{k,t,z})$.
Since branching reuses rollout-cached logits, generating these $Z$ completions incurs no additional \emph{rollout} forward passes.

Each completion is scored by the same terminal reward function,
\[
  R_{k,t,z} = R(q,o_{k,t,z}),
\]
and we compute a per-state baseline and group-relative advantages:
\begin{equation}
  \bar R_{k,t} = \frac{1}{Z}\sum_{z=1}^Z R_{k,t,z},
  \quad
  A_{k,t,z} = R_{k,t,z}-\bar R_{k,t}.
  \label{eq:group-adv}
\end{equation}
These advantages capture state-conditioned preferences over alternative fillings under a fixed intermediate state.

\subsection{Step-level Surrogate Log-probabilities}
\label{sec:method-surrogate}
To optimize masked-token actions at intermediate states, we need tractable log-probabilities to form likelihood ratios for the diffusion-state policy.
Following \citet{zhao2025d1}, we use a one-step masked-token surrogate $\tilde{\pi}_\theta$ (Eq.~\ref{eq:global-surrogate}), and adapt it to a \emph{state-wise definition} that accounts only for the currently masked positions.

\textbf{State-wise masked-token surrogate.}
For each state $s_{k,t}=(q,x_{k,t})$, sample a random prompt-masking pattern $m$ to obtain a corrupted prompt $q^{(m)}$ and define
$s_{k,t}^{(m)}\!=\!(q^{(m)},x_{k,t})$.
Running a single denoising step on $s_{k,t}^{(m)}$ yields per-position distributions $\tilde{\pi}_\theta(\cdot\!\mid\! s_{k,t}^{(m)},i)$ for all $i\in M_{k,t}$.
For an action sample $a_{k,t,z}$, define the state-wise surrogate log-probability by summing over actionable positions:
\begin{equation}
  \log \tilde{\pi}_\theta(a_{k,t,z}\!\mid\! s_{k,t})
  \!:=\!
  \mathbb{E}_{m}\!\!\left[
    \sum_{i\in M_{k,t}}
      \!\!\log \tilde{\pi}_\theta\bigl(a_{k,t,z,i}\!\mid\! s_{k,t}^{(m)}\!,\!i\bigr)
  \right].
  \label{eq:subset-logprob-method}
\end{equation}
This restriction makes the update token-local: rewards computed from completions affect only the newly filled tokens at the current state.

\textbf{Implementation note.}
We estimate the expectation over $m$ with the same Monte Carlo scheme as \citet{zhao2025d1}.
All log-probabilities, likelihood ratios, and KL terms in our objectives are computed using $\tilde{\pi}_\theta$.

\subsection{Training Objective}
\label{sec:method-objective}
For clarity, we present the \emph{unclipped} likelihood-ratio objectives; our implementation uses standard clipping and a KL penalty to a reference surrogate policy $\tilde{\pi}_{\text{ref}}$.

\textbf{State-wise DiSPO objective.}
For each selected state $(k,t)\in\mathcal{S}$, we have $Z$ branched actions with rewards $\{(a_{k,t,z},R_{k,t,z})\}_{z=1}^Z$ and group-relative advantages from Eq.~\ref{eq:group-adv}.
Using the state-wise surrogate in Eq.~\ref{eq:subset-logprob-method}, define
\[
  \rho_{k,t,z}(\theta)
  =
  \exp\!\Bigl(
    \log \tilde{\pi}_\theta(a_{k,t,z}\mid s_{k,t})
    -
    \log \tilde{\pi}_{\theta_{\mathrm{old}}}(a_{k,t,z}\mid s_{k,t})
  \Bigr).
\]
The state-wise loss by \method\ for state $(k,t)$ is
\begin{equation}
  \mathcal{L}_{\text{step}}^{(k,t)}(\theta)
  =
  - \frac{1}{Z}\sum\nolimits_{z=1}^{Z} \rho_{k,t,z}(\theta)\, A_{k,t,z},
  \label{eq:inter-loss-method}
\end{equation}
and the aggregated state-wise loss is
\begin{equation}
    \textcolor{Magenta}{\mathcal{L}_{\text{step}}(\theta)}=\sum\nolimits_{(k,t)\in\mathcal{S}} \mathcal{L}_{\text{step}}^{(k,t)}(\theta).
\end{equation}

\begin{algorithm}[t]
\small
\caption{\method\ with a Terminal-Feedback Base Optimizer}
\label{alg:dispo}
\begin{algorithmic}[1]
\Require initial parameters $\theta$, reward function $R$, dataset $\mathcal{D}$, trajectory size $K$, branch size $Z$, batch size $B$,
\Statex \hspace{\algorithmicindent} timestep sampler $\omega(t)$,
weights $\alpha_{\mathrm{step}}, \alpha_{\mathrm{base}}$,
objectives
$\textcolor{Violet}{\mathcal{L}_{\mathrm{base}}(\cdot)}$,
$\textcolor{Magenta}{\mathcal{L}_{\mathrm{step}}(\cdot)}$,
learning rate $\eta$.

\Repeat
  \State \textbf{initialize:} $\theta_{\mathrm{old}} \leftarrow \theta$, $\mathcal{L} \leftarrow 0$
  \State \textbf{sample prompts:} $\{q_b\}_{b=1}^B \sim \mathcal{D}$

  \For{each prompt $q_b$}

    \BeginBox[fill=Violet!5, draw=Violet!30, rounded corners]
      \State \textbf{rollout:} sample $K$ denoising trajectories with $\theta_{\mathrm{old}}$; cache logits at each visited state $s_{k,t}=(q_b,x_{k,t})$; obtain terminal completions $\{o_k\}_{k=1}^K$.

      \State \textbf{terminal rewards:}
      $\{R_k\}_{k=1}^K \leftarrow \{R(q_b,o_k)\}_{k=1}^K$

      \State \textbf{terminal loss:}
      $\begin{aligned}[t]
      \mathcal{L}
      \leftarrow\;&
      \mathcal{L}
      + \alpha_{\mathrm{base}}\;
      \textcolor{Violet}{
      \mathcal{L}_{\mathrm{base}}
      \bigl(
      \{(q_b,o_k,R_k)\}_{k=1}^K;
      \tilde{\pi}_\theta,
      \tilde{\pi}_{\theta_{\mathrm{old}}}
      \bigr)}
      \end{aligned}$
    \EndBox\BeginBox[fill=Magenta!4, draw=Magenta!30, rounded corners]
      \State \textbf{sample timesteps:}
      $\mathcal{T}_{\mathrm{sub}} \sim \omega(t)$ {\quad \footnotesize $\triangleright$ branching reuses cached logits; no additional rollout passes}

      \State \textbf{select states:}
      $\mathcal{S}_b \leftarrow \{1,\dots,K\}\times \mathcal{T}_{\mathrm{sub}}$

      \For{each selected $(k,t)\in \mathcal{S}_b$}
        \State \textbf{same-state branching:}
        sample actions $\{a_{k,t,z}\}_{z=1}^Z$ from cached logits at $s_{k,t}$

        \State \textbf{form completions:}
        $o_{k,t,z}=\textsc{Fill}(x_{k,t},a_{k,t,z})$

        \State \textbf{step rewards:}
        $\{R_{k,t,z}\}_{z=1}^Z
        \leftarrow
        \{R(q_b,o_{k,t,z})\}_{z=1}^Z$

        \State \textbf{pack outcomes:}
        $\mathcal{B}_{k,t}
        \leftarrow
        \{(a_{k,t,z},R_{k,t,z})\}_{z=1}^Z$

        \State \textbf{step-wise logps:}
        $\{\log\tilde{\pi}_\theta(a_{k,t,z}\mid s_{k,t})\}_{z=1}^Z$
        and
        $\{\log\tilde{\pi}_{\theta_{\mathrm{old}}}
        (a_{k,t,z}\mid s_{k,t})\}_{z=1}^Z$

        \State \textbf{step-wise loss:}
        $\begin{aligned}[t]
        \mathcal{L}
        \leftarrow\;&
        \mathcal{L}
        + \alpha_{\mathrm{step}}\;
        \textcolor{Magenta}{
        \mathcal{L}_{\mathrm{step}}
        \bigl(
        s_{k,t},
        \mathcal{B}_{k,t};
        \tilde{\pi}_\theta,
        \tilde{\pi}_{\theta_{\mathrm{old}}}
        \bigr)}
        \end{aligned}$
    \EndBox
      \EndFor
  \EndFor

  \State \textbf{update:}
  $\theta \leftarrow \theta - \eta \nabla_\theta \mathcal{L}$
  {\quad \footnotesize $\triangleright$ matched number of update steps}

\Until{convergence}

\end{algorithmic}
\end{algorithm}

\textbf{Combination with a base terminal objective.}
DiSPO is agnostic to the choice of terminal-feedback optimizer.
Let $\mathcal{L}_{\mathrm{base}}(\theta)$ denote a terminal-feedback MDLM PO objective computed from the same rollout completions and terminal reward function.
We optimize
\begin{equation}
  \mathcal{L}(\theta)
  =
  \alpha_{\mathrm{base}}\,\textcolor{Violet}{\mathcal{L}_{\mathrm{base}}(\theta)}
  +
  \alpha_{\mathrm{step}}\,\textcolor{Magenta}{\mathcal{L}_{\mathrm{step}}(\theta)},
  \label{eq:total-loss}
\end{equation}
where $\alpha_{\mathrm{base}},\alpha_{\mathrm{step}}\in\mathbb{R}_{\geq 0}$.
This separates the proposed state-wise credit-assignment objective from the replaceable terminal-feedback base optimizer. 
We set $\alpha_{\text{base}}\!=\!1$ in experiments.

\textbf{diffu-GRPO instantiation.}
For controlled comparisons with prior MDLM policy optimization, we instantiate $\mathcal{L}_{\mathrm{base}}$ with diffu-GRPO~\citep{zhao2025d1}.
For each prompt $q$, rollout completions $\{o_k\}_{k=1}^K$ are scored by $r_k=R(q,o_k)$ and converted to group-relative advantages
\(
  A_k = r_k-\bar r_q
\)
where $\bar r_q=\frac{1}{K}\sum_{k=1}^{K} r_k$.
Using the sequence-level surrogate log-probabilities $\log\tilde{\pi}_\theta(q,o_k)$ from Eq.~\ref{eq:global-surrogate}, define
\[
  \rho_k(\theta)
  =
  \exp\!\Bigl(
    \log\tilde{\pi}_\theta(q,o_k)
    -
    \log\tilde{\pi}_{\theta_{\mathrm{old}}}(q,o_k)
  \Bigr).
\]
The diffu-GRPO base loss is
\begin{equation}
  \textcolor{Violet}{\mathcal{L}_{\mathrm{base}}(\theta)}
  = -\frac{1}{K}\sum\nolimits_{k=1}^K \rho_k(\theta)\,A_k.
  \label{eq:diffu-grpo-base-loss}
\end{equation}
Other terminal-feedback objectives, such as SPG, can be used by replacing \textcolor{Violet}{$\mathcal{L}_{\mathrm{base}}$} in Eq.~\ref{eq:total-loss}.

\subsection{Theoretical Analysis}
\label{sec:moved:theory}

We summarize the theoretical properties of \method; formal statements and proofs are given in Appendix~\ref{sec:theory}.
The analysis considers the unclipped likelihood-ratio objective without the KL penalty.

\textbf{{Fixed-state policy gradient.}}
First, \method\ provides a valid policy-gradient signal for fixed-state mask filling.
For a fixed denoising step, conditioning on intermediate masked states induced by the frozen behavior policy, the expected gradient of the step-wise loss is aligned with the policy gradient of a fixed-state expected-return objective, up to the constant factor $c_Z=(Z-1)/Z$ induced by the within-state group baseline.
See Theorem~\ref{thm:step-pg} in Appendix~\ref{sec:theory-step}.

\textbf{{Mixed objective.}}
Second, the step-wise objective can be combined with a terminal-feedback base optimizer without introducing an ad-hoc learning signal.
In expectation, the combined loss optimizes a single mixed objective consisting of the base terminal objective and the timestep-averaged fixed-state objectives.
See Theorem~\ref{thm:mixed} in Appendix~\ref{sec:theory-mixed}.

\textbf{Variance-motivated design choices.}
Finally, we provide simplified variance analyses that motivate the two main design choices in \method: updating only newly filled tokens and averaging over same-state branches.
Under standard independence assumptions, token-local updates reduce variance by excluding non-actionable positions, while averaging $Z$ same-state branches yields the usual $O(1/Z)$ variance reduction.
See Propositions~\ref{prop:var} and~\ref{prop:compute} in Appendix~\ref{sec:theory-var}.

\section{Experiments}
\label{sec:experiments}
This section evaluates whether state-conditioned credit assignment improves terminal-feedback PO for MDLMs.
We plug \method\ into two base optimizers, diffu-GRPO~\citep{zhao2025d1} and SPG~\citep{spg}, by adding the state-wise objective in Algorithm~\ref{alg:dispo}.
For controlled comparisons, we match multi-step diffusion rollouts, sequence-level group size, and RL update steps.
\method\ forms branches only from logits cached in those rollouts, so gains reflect the added state-wise signal rather than extra rollout compute.

\subsection{Setup}
\label{sec:setup}

\textbf{Models.}
We evaluate LLaDA-8B-Instruct~\citep{nie2025largelanguagediffusionmodels} and its s1k-supervised fine-tuned variant~\citep{s1k}.
We follow \citet{zhao2025d1} for the model setup and denoising schedule. See Appendix~\ref{app:exp:setting} for details.

\textbf{Benchmarks and metrics.}
Following \citet{zhao2025d1}, we report exact-match accuracy on math reasoning benchmarks (GSM8K~\citep{gsm8k}; MATH500~\citep{lightman2023lets}) and symbolic planning benchmarks (Sudoku; Countdown~\citep{zhao2025d1}).
See Appendix~\ref{app:exp:benchmark} for details.

\textbf{Optimization.}
For each base optimizer, we match the dominant training budgets between the baseline and its \method-augmented variant, including $N_{\text{gen}}{=}256$, the denoising schedule (with block size 32), the rollout budget, and the number of RL update steps.
For SPG, we use the original paper's practical masking and mixed negative-advantage estimator settings, fixed for both SPG and \method$_\text{SPG}$.
We use the same terminal reward evaluator for both terminal and state-wise terms, without reward shaping.
\method\ uses $Z{=}2$ and samples one timestep per rollout from a late-biased polynomial distribution with degree $k{=}4$.
We sweep $\alpha_{\text{step}}\in\{0.1,0.5,0.9\}$.
See Appendix~\ref{app:exp:setting} for details.

\textbf{Evaluation.}
We follow the inference protocol of diffu-GRPO~\citep{zhao2025d1}.
We evaluate at $N_{\text{gen}}\in\{128,256,512\}$ using zero-shot greedy decoding.
For diffusion decoding, we use $N_{\text{gen}}/2$ denoising steps with a block size of 32 tokens.
Additional details are provided in Appendix~\ref{app:exp:inference}.

\begin{table*}[t]
\centering
\footnotesize
\caption{\textbf{\method{} on LLaDA.}
Exact-match accuracy (\%) on planning and math reasoning, evaluated with $N_{\text{gen}}\in\{128,256,512\}$.
Matched-compute comparisons are made within each base optimizer: diffu-GRPO vs.\ \method$_\text{diffu-GRPO}$ and SPG vs.\ \method$_\text{SPG}$, with the same training $N_{\text{gen}}{=}256$, multi-step diffusion rollout budget, and optimizer steps.
$\dagger$ indicates a non-matched-compute reward-shaping baseline included for reference.
$\ddagger$ indicates results reported by \citet{zhao2025d1}.}
\begin{tabular}{@{}l@{\,\,\,\,\,}r@{\,\,\,}r@{\,\,\,}r@{\,\,\,}rr@{\,\,\,}r@{\,\,\,}r@{\,\,\,}rr@{\,\,\,}r@{\,\,\,}r@{\,\,\,}rr@{\,\,\,}r@{\,\,\,}r@{\,\,\,}r@{}}
\toprule
& \multicolumn{4}{c}{\textbf{Sudoku}} & \multicolumn{4}{c}{\textbf{Countdown}} & \multicolumn{4}{c}{\textbf{GSM8K}} & \multicolumn{4}{c}{\textbf{MATH500}} \\
\cmidrule(lr){2-5}\cmidrule(lr){6-9}\cmidrule(lr){10-13}\cmidrule(lr){14-17}
\textbf{Methods} ~$\backslash$~ \textbf{$N_\text{gen}$} & \textbf{128} & \textbf{256} & \textbf{512} & \textbf{Avg.} & \textbf{128} & \textbf{256} & \textbf{512} & \textbf{Avg.} & \textbf{128} & \textbf{256} & \textbf{512} & \textbf{Avg.} & \textbf{128} & \textbf{256} & \textbf{512} & \textbf{Avg.} \\
\midrule
LLaDA-8B-Inst.~\cite{nie2025largelanguagediffusionmodels}$^\ddagger$ & 11.7 & 6.7 & 5.5 & 8.0 & 20.7 & 19.5 & 16.0 & 18.7 & 68.7 & 76.7 & 78.2 & 74.5 & 26.0 & 32.4 & 36.2 & 31.5 \\
\midrule
diffu-GRPO~\citep{zhao2025d1}$^\ddagger$ & 18.4 & 12.9 & 11.0 & 14.1 & 33.2 & 31.3 & 37.1 & 33.9 & 72.6 & 79.8 & {81.9} & 78.1 & 33.2 & 37.2 & 39.2 & 36.5 \\
\textbf{\method}$_\text{diffu-GRPO}$
  & \textbf{34.7} & \textbf{30.2} & \textbf{29.4} & \textbf{31.4}
  & \textbf{{49.6}} & \textbf{{49.6}} & \textbf{62.5} & \textbf{53.9}
  & \textbf{74.8} & \textbf{{81.9}} & \textbf{82.9} & \textbf{79.9}
  & \textbf{34.0} & \textbf{{39.2}} & \textbf{40.2} & \textbf{37.8} \\
\midrule
SPG~\citep{spg} & 26.6 & 26.0 & 26.1 & 26.2 & 67.6 & 66.8 & 67.2 & 67.2 & 77.3 & 80.9 & 81.5 & 79.9 & 33.6 & 38.4 & 40.2 & 37.4 \\
\textbf{\method}$_\text{SPG}$\ & \textbf{43.2} & \textbf{44.0} & \textbf{43.4} & \textbf{43.5} & \textbf{68.0} & \textbf{68.8} & \textbf{68.8} & \textbf{68.5} & \textbf{79.0} & \textbf{81.7} & \textbf{81.8} & \textbf{80.8} & \textbf{35.2} & \textbf{39.4} & \textbf{41.0} & \textbf{38.5}\\
\midrule
\textcolor{black!55}{SAPO~\citep{sapo}}$^\dagger$
  & \textcolor{black!55}{22.4} & \textcolor{black!55}{20.3} & \textcolor{black!55}{16.1} & \textcolor{black!55}{19.6}
  & \textcolor{black!55}{51.6} & \textcolor{black!55}{52.0} & \textcolor{black!55}{56.3} & \textcolor{black!55}{53.3}
  & \textcolor{black!55}{72.9} & \textcolor{black!55}{82.2} & \textcolor{black!55}{82.4} & \textcolor{black!55}{79.2}
  & \textcolor{black!55}{32.0} & \textcolor{black!55}{40.0} & \textcolor{black!55}{38.4} & \textcolor{black!55}{36.8} \\
\bottomrule
\end{tabular}
\label{tab:main-results}
\end{table*}

\begin{table*}[t]
\centering
\footnotesize
\caption{\textbf{\method{} on LLaDA-SFT.}
Under the same evaluation and matched-compute setup as Table~\ref{tab:main-results}, \method\ uses a conservative step weight ($\alpha_{\text{step}}{=}0.1$) and still improves SFT + diffu-GRPO across planning and math benchmarks. 
$\ddagger$ indicates results reported by \citet{zhao2025d1}.}
\begin{tabular}{@{}l@{\,\,\,\,\,}r@{\,\,\,}r@{\,\,\,}r@{\,\,\,}rr@{\,\,\,}r@{\,\,\,}r@{\,\,\,}rr@{\,\,\,}r@{\,\,\,}r@{\,\,\,}rr@{\,\,\,}r@{\,\,\,}r@{\,\,\,}r@{}}
\toprule
& \multicolumn{4}{c}{\textbf{Sudoku}} & \multicolumn{4}{c}{\textbf{Countdown}} & \multicolumn{4}{c}{\textbf{GSM8K}} & \multicolumn{4}{c}{\textbf{MATH500}} \\
\cmidrule(lr){2-5}\cmidrule(lr){6-9}\cmidrule(lr){10-13}\cmidrule(lr){14-17}
\textbf{Methods} ~$\backslash$~ \textbf{$N_\text{gen}$} & \textbf{128} & \textbf{256} & \textbf{512} & \textbf{Avg.} & \textbf{128} & \textbf{256} & \textbf{512} & \textbf{Avg.} & \textbf{128} & \textbf{256} & \textbf{512} & \textbf{Avg.} & \textbf{128} & \textbf{256} & \textbf{512} & \textbf{Avg.} \\
\midrule
LLaDA-8B-Inst.~\cite{nie2025largelanguagediffusionmodels}$^\ddagger$ & 11.7 & 6.7 & 5.5 & 8.0 & 20.7 & 19.5 & 16.0 & 18.7 & 68.7 & 76.7 & 78.2 & 74.5 & 26.0 & 32.4 & 36.2 & 31.5 \\
\midrule
SFT~\citep{s1k}$^\ddagger$ & 16.5 & 8.5 & 4.6 & 9.9 & 20.3 & 14.5 & 23.8 & 19.5 & 66.5 & 78.8 & 81.1 & 75.5 & 26.2 & 32.6 & 34.8 & 31.2 \\
SFT \& diffu-GRPO~\citep{zhao2025d1}$^\ddagger$ & 22.1 & 16.7 & 9.5 & 16.1 & 34.8 & 32.0 & \textbf{42.2} & 36.3 & 73.2 & 81.1 & 82.1 & 78.8 & {33.8} & {38.6} & \textbf{40.2} & {37.5} \\

SFT \& \textbf{\method}$_\text{diffu-GRPO}$ & \textbf{26.4} & \textbf{25.0} & \textbf{11.7} & \textbf{21.0} & \textbf{37.5} & \textbf{32.8} & 40.2 & \textbf{36.8} & \textbf{75.0} & \textbf{82.3} & \textbf{83.2} & \textbf{80.2} & \textbf{34.8} & \textbf{39.2} & \textbf{40.2} & \textbf{38.1} \\
\bottomrule
\end{tabular}
\label{tab:sft-results}
\end{table*}

\subsection{Main results}
\label{sec:main-results}

Table~\ref{tab:main-results} shows that \method\ improves both base optimizers, diffu-GRPO and SPG, on LLaDA-8B-Instruct across benchmarks.
The gains are largest on planning tasks and generally persist across evaluation budgets $N_{\text{gen}}\in\{128,256,512\}$.
Table~\ref{tab:sft-results} further shows that the same state-wise objective also improves after SFT, even with the conservative setting $\alpha_{\text{step}}{=}0.1$.

All matched comparisons use the same multi-step diffusion rollout budget, sequence-level group size, optimizer steps, and terminal reward evaluator.
Thus, the improvements are best explained by finer state-conditioned credit assignment rather than additional rollout compute or reward shaping.
Training-dynamics of rewards plots are provided at Figure~\ref{fig:qualitative-2x2} in Appendix~\ref{reward-curve}.

\begin{table}[t]
\centering
\footnotesize
\caption{\textbf{Ablation studies.}
(\textbf{Left}) Step-wise estimator design: gradient scope (Prop.~\ref{prop:var}) and number of same-state drafts $Z$ (Prop.~\ref{prop:compute}).
(\textbf{Right}) Optimization recipe: timestep sampling, loss components, and step-loss intensity $\alpha_{\text{step}}$.
All values represent the exact-match accuracy (\%) on Sudoku.}
\begin{minipage}[t]{0.48\linewidth}
\centering
\begin{tabular}{l@{\,\,}lllll}
\toprule
\multicolumn{2}{l}{\textbf{Perspectives} ~$\backslash$~ $N_\text{gen}$} & {\textbf{128}} & {\textbf{256}} & {\textbf{512}} & {\textbf{Avg.}} \\
\midrule
\multirow{2}{*}{\rotatebox[origin=c]{90}{\textbf{\scriptsize{Grad.}}}}
& $\triangleright$ All tokens & {22.7} & {21.4} & {\textbf{20.6}} & {21.6} \\
& $\triangleright$ Action only & \textbf{26.6}& \textbf{23.3} & 20.0 & \textbf{23.3} \\
\midrule
\multirow{3}{*}{\rotatebox[origin=c]{90}{\textbf{\scriptsize{\# Drafts}}}}
& $\triangleright$ $Z=2$ & {\textbf{26.6}} & {\textbf{23.3}} & {20.0} & {\textbf{23.3}} \\
& $\triangleright$ $Z=3$ & 25.1 & 22.8 & \textbf{21.0} & 23.0 \\
& $\triangleright$ $Z=4$ & \textbf{26.6} & 22.1 & 19.6 & 22.8 \\
\bottomrule
\end{tabular}
\subcaption{Step-wise estimator design.}
\label{tab:ablation-variance}
\end{minipage}
\hfill
\begin{minipage}[t]{0.48\linewidth}
\centering
\begin{tabular}{l@{\,\,}lllll}
\toprule
\multicolumn{2}{l}{\textbf{Perspectives} ~$\backslash$~ $N_\text{gen}$} & {\textbf{128}} & {\textbf{256}} & {\textbf{512}} & {\textbf{Avg.}} \\
\midrule
\multirow{3}{*}{\rotatebox[origin=c]{90}{\textbf{\scriptsize{Timestep}}}}
& $\triangleright$ Uniform & {25.1} & {23.0} & {17.5} & {21.9} \\
& $\triangleright$ Early-focused & {24.2} & {18.1} & {16.0} & {19.4} \\
& $\triangleright$ Late-focused & {\textbf{26.6}} & {\textbf{23.3}} & {\textbf{20.0}} & {\textbf{23.3}} \\
\midrule
\multirow{3}{*}{\rotatebox[origin=c]{90}{\textbf{\scriptsize{Loss}}}}
& $\triangleright$ $\mathcal{L}_{\mathrm{base}}$ only & {18.4} & {12.9} & {11.0} & {14.1} \\
& $\triangleright$ $\mathcal{L}_{\text{step}}$ only & {24.9} & {18.4} & {17.7} & {20.3} \\
& $\triangleright$ Both & \textbf{26.6} & \textbf{23.3} & \textbf{20.0} & \textbf{23.3} \\
\midrule
\multirow{4}{*}{\rotatebox[origin=c]{90}{\textbf{\scriptsize{Intensity}}}}
& $\triangleright$ $\alpha_{\text{step}}=0.0$ & {18.4} & {12.9} & {11.0} & {14.1} \\
& $\triangleright$ $\alpha_{\text{step}}=0.1$ & {26.6} & {23.3} & {20.0} & {23.3} \\
& $\triangleright$ $\alpha_{\text{step}}=0.5$ & \textbf{34.7} & \textbf{30.2} & \textbf{29.4} & \textbf{31.4} \\
& $\triangleright$ $\alpha_{\text{step}}=0.9$ & {28.8} & {27.8} & {26.3} & {27.6} \\
\bottomrule
\end{tabular}
\subcaption{Optimization recipe.}
\label{tab:ablation}
\end{minipage}
\end{table}

\subsection{Analysis: State-wise Estimator Design}
\label{sec:ablation-variance}
Propositions~\ref{prop:var} and~\ref{prop:compute} motivate two design choices for the state-wise estimator: i) updating only actionable tokens and ii) averaging over $Z$ same-state branches.
We test both on 4$\times$4 Sudoku through controlled gradient-variance measurements and task ablations.

\begin{wrapfigure}{r}{0.44\linewidth}
    \vspace{-1.5\baselineskip}
    \centering
    \includegraphics[width=\linewidth]{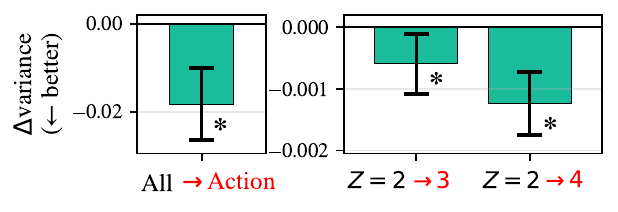}
    \caption{\textbf{Variance reduction of the step-wise gradients on Sudoku.}
        \textbf{\textit{Left}}: Updating only actions (vs.\ all tokens) reduces variance at $Z{=}2$. 
        \textbf{\textit{Right}}: Increasing $Z$ reduces variance. 
        Error bars: paired 95\% bootstrap CIs.
        }
  \label{fig:gradvar}
  \vspace{-1.1\baselineskip}
\end{wrapfigure}
\textbf{Gradient variance.}
Using fixed intermediate states from the final LLaDA-8B-Instruct Sudoku checkpoint, we resample branches and estimate the trace covariance of the state-wise gradient, $\mathrm{trCov}(\hat g)$.
Figure~\ref{fig:gradvar} confirms both predictions: i) action-only updates reduce variance relative to all-token updates, and ii) larger $Z$ reduces variance under action-only updates.

\textbf{Task performance.}
Table~\ref{tab:ablation-variance} shows that all-token updates underperform action-only updates, supporting the state--action decomposition.
Increasing $Z$ from 2 to 4 sometimes improves individual evaluation budgets but gives no monotonic average gain, so we use $Z{=}2$ by default.

\subsection{Ablations: Optimization Choices}
\label{sec:ablation}

\textbf{Timestep sampling.}
Table~\ref{tab:ablation} (top) compares uniform sampling with late- and early-biased polynomial samplers ($k{=}4$).
Late-biased sampling performs best, suggesting that states closer to the final completion provide more reliable credit-assignment signals than very early masked states.

\textbf{Objective combination.}
Table~\ref{tab:ablation} (middle) shows that $\mathcal{L}_{\text{step}}$ alone outperforms $\mathcal{L}_{\mathrm{base}}$ alone, while their combination performs best, indicating thier complementary nature.

\textbf{Step-wise weight.}
Table~\ref{tab:ablation} (bottom) shows that all tested $\alpha_{\text{step}}\!\in\!\{0.1,0.5,0.9\}$ improve over $\alpha_{\text{step}}{=}0$, with the best result at $\alpha_{\text{step}}{=}0.5$.
Thus, the state-wise objective is consistently useful, but its weight should be balanced against the terminal objective.

\subsection{Robustness and Diagnostics of Intermediate-State Branching}
\label{sec:branching-diagnostics}
\begin{wraptable}{r}{0.43\linewidth}
\vspace{-1.0\baselineskip}
\centering
\small
\caption{\textbf{Block size of rollout trajectories at the training on Sudoku.}}
\label{tab:block-size}
\begin{tabular}{@{}l@{\,\,}c@{\,\,\,\,}c@{\,\,\,\,}c@{\,\,\,\,}c@{}}
\toprule
\textbf{Block size / $N_{\text{gen}}$} & \textbf{128} & \textbf{256} & \textbf{512} & \textbf{Avg.} \\
\midrule
(no training) & 11.7 & 6.7 & 5.5 & 8.0 \\
\midrule
$8$    & 29.5 & 26.9 & 26.3 & 27.6 \\
$32$ (default) & 34.7 & 30.2 & 29.4 & 31.4 \\
$128$ & 32.4 & 30.8 & 32.4 & 31.9 \\
\bottomrule
\end{tabular}
\vspace{-0.7\baselineskip}
\end{wraptable}
\textbf{Training rollout block size.}
\method\ is not tied to a particular rollout schedule: it only requires masked-position logits at selected intermediate states.
Table~\ref{tab:block-size} varies the block size used during \textit{training} rollouts and shows that \method\ remains effective beyond the default 32-token semi-autoregressive schedule.

\textbf{Intermediate states provide useful learning signals.}
Under 32-token block-wise rollouts, \textbf{17.8\%} of uniformly sampled intermediate states already yield different rewards across same-state branches, with the fraction increasing sharply later in denoising (Tab.~\ref{tab:diag-state} in Appx.~\ref{app:diag:inter}.). 
This confirms that intermediate states provide meaningful non-terminal supervision.

\textbf{Cached-branch stability.}
Cached logits are used only for branch sampling; optimization still uses current-vs.-old surrogate probabilities.
Replay experiments show that step-level $\log\rho$ remains comparable to terminal $\log\rho$, with very low clip fractions (Table~\ref{tab:diag-state}) in Appendix~\ref{app:diag:stability}), indicating no observable instability from cached-branch reuse.

\subsection{Compute Cost Breakdown}
\label{sec:exp:compute}

\begin{wraptable}{r}{0.44\linewidth}
\vspace{-1.2\baselineskip}
\centering
\small
\caption{\textbf{Additional operation counts per prompt with \method.}
$Z$: \# branches/state, $\mathcal{S}$: selected states, $N_m$: \# monte-carlo prompt masks for $\mathbb{E}_m[\cdot]$ in the surrogate.}
\label{tab:compute-budget}
\begin{tabular}{@{}lc@{}}
\toprule
\textbf{Compute items} & \textbf{$\Delta$ w/ \method} \\
\midrule
Diffusion rollout forward & $0$ \\
Optimizer update steps & $0$ \\
One-step surrogate (terminal) & $0$ \\
\midrule
Reward evaluations & \roundboxblue{$|\mathcal{S}|Z$} \\
One-step surrogate (step-wise) & \roundboxblue{$2N_m|\mathcal{S}|$} \\
\bottomrule
\end{tabular}
\vspace{-0.5\baselineskip}
\end{wraptable}
\textbf{Algorithmic overhead.}
Table~\ref{tab:compute-budget} summarizes the additional operations introduced by \method.
The dominant training costs---multi-step diffusion rollouts and optimizer update steps---are unchanged.
\method\ adds only lightweight extras: reward evaluations for same-state branches and \textit{one-step} surrogate calls for step-wise log-probabilities at selected states.

\textbf{Implementation-level wall-clock overhead.}
Although \method\ leaves the dominant algorithmic costs unchanged, our current implementation is, in practical, slower in wall-clock time ($\sim$0.4$\times$ baseline throughput at $Z{=}2$ and $|\mathcal{S}|{=}K$) due to non-optimized repeated policy/surrogate calls and kernel/IO.
This overhead is implementation-dependent rather than intrinsic to the rollout/update budget; likely optimizations are discussed in Appendix~\ref{app:wallclock}.
Figure~\ref{fig:wallclock} therefore compares wall-clock-matched training curves, where \method\ still catches up and surpasses the terminal-feedback baseline within the same time budget.

\begin{figure}[t]
    \centering
    \includegraphics[width=0.7\linewidth]{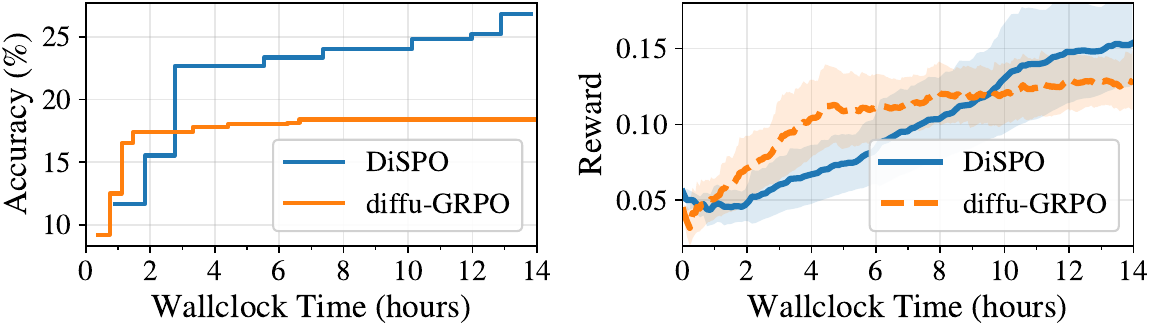}
\caption{\textbf{Wall-clock-matched training curves on LLaDA-8B-Instruct for Sudoku.} Accuracy ($N_\mathrm{gen}{=}128$) and reward vs.\ training time. \method\ surpasses diffu-GRPO within the budget.}
    \label{fig:wallclock}
\end{figure}

\subsection{Error Analysis: Premature Commitments}
\label{sec:qual}
\begin{wrapfigure}{r}{0.5\linewidth}
  \vspace{-1.4\baselineskip}
  \centering
  \includegraphics[width=\linewidth]{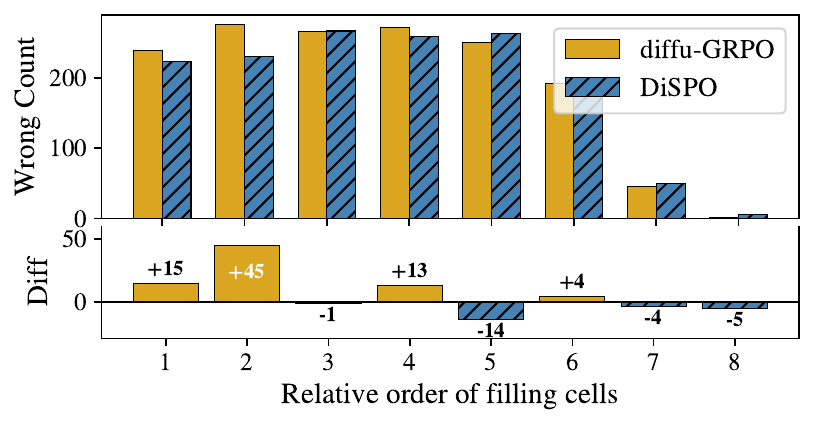}
  \caption{\textbf{Premature commitments on Sudoku.}
  }
  \label{fig:first-violation-hist}
  \vspace{-1.0\baselineskip}
\end{wrapfigure}
We analyze premature commitment errors on Sudoku using the final LLaDA-8B-Instruct checkpoint with $N_{\text{gen}}{=}128$.
We define the \textit{first-violation time} as the earliest denoising step at which a filled cell violates a Sudoku constraint.
Figure~\ref{fig:first-violation-hist} shows that \method\ shifts first violations to later denoising steps compared with diffu-GRPO, suggesting that optimization of intermediate actions reduces early, hard-to-correct constraint violations. Qualitative examples are provided in Figure~\ref{fig:qual-sudoku-case} in Appendix~\ref{app:qual-sudoku}.

\section{Related Work}
\label{sec:related} 
\textbf{Terminal-feedback policy optimization for MDLMs.}
Most RL fine-tuning methods for masked diffusion LMs optimize a scalar terminal reward on the final completion, using PPO/GRPO-style objectives with diffusion surrogate likelihoods~\citep{zhao2025d1,spg,tang2025wd1,cjgrpo,gdpo,sepo,diffpo}.
\method\ is complementary: it compares alternative mask fillings at fixed intermediate states via a state-wise objective.

\textbf{Terminal reward shaping.}
SAPO uses denoising states to shape terminal rewards with step-aware bonuses~\citep{sapo}.
This is complementary to \method: SAPO designs denser rewards along realized denoising processes, whereas \method\ optimizes filling actions under a fixed masked state.
We include SAPO as a non-matched-compute reference in Table~\ref{tab:main-results}, with compute details in Appendix~\ref{app:sapo}.

\textbf{Trajectory-level objectives.}
MDPO aligns the denoising process by comparing different denoising trajectories across steps and favoring trajectories whose intermediate predictions improve toward the terminal outcome~\citep{mdpo}.
In contrast, \method\ has the different optimization unit: it fixes an intermediate masked state and compares alternative fillings of the current masks.

\textbf{Surrogate likelihoods and gradient estimators.}
Complementary work improves surrogate likelihoods or policy-gradient estimators for diffusion LMs~\citep{spg,sepo,gdpo,d2}.
\method\ can directly incorporate improved surrogates/estimators while keeping the state-wise objective unchanged.

\section{Conclusion}\label{sec:conclusion}
We proposed \method, a plug-in objective that turns intermediate masked states into state-wise credit-assignment signals for MDLM policy optimization.
It reuses rollout-cached logits to compare alternative mask fillings with no additional multi-step diffusion rollouts.
On LLaDA-8B-Instruct, \method\ improves both diffu-GRPO and SPG across benchmarks under matched compute.

\textbf{Limitations and discussion.}
Future work includes extending \method\ beyond diffu-GRPO and SPG to objectives such as GDPO~\citep{gdpo}, stronger surrogate estimators, and reward shaping such as SAPO~\citep{sapo}.
Although \method\ adds no multi-step diffusion rollouts or optimizer steps, our implementation incurs wall-clock overhead from reward and surrogate evaluations.
Like other LM optimization methods, \method\ may improve reasoning and planning but could also strengthen misuse-capable models; our experiments are limited to public benchmarks and do not involve deployment or user data.


\clearpage
\section*{Acknowledgement}
These research results were obtained from the commissioned research (No.22501) by National Institute of Information and Communications Technology (NICT) , Japan.
This work was partially supported by JSPS KAKENHI Grant Number 25H01137 and JST K Program Japan Grant Number JPMJKP24C3.

\bibliographystyle{unsrtnat}
\bibliography{custom}

\appendix

\section{Theoretical Analysis}
\label{sec:theory}
\method\ treats intermediate diffusion states as decision points and directly optimizes their mask fillings via policy gradients.
Under the masked-token surrogate policy $\tilde{\pi}_\theta$ (Eq.~\ref{eq:subset-logprob-method}), 
we show that the step-wise loss provides a \emph{principled} policy-gradient estimator for a fixed-state objective (Theorem~\ref{thm:step-pg}).
We further show that combining step-wise and terminal losses corresponds in expectation to optimizing a single objective (Theorem~\ref{thm:mixed}).
We also provide simple variance-reduction mechanisms that justify token-local updates and same-state averaging (\S~\ref{sec:theory-var}).
For clarity, we analyze the \emph{unclipped} likelihood-ratio objective without KL.

\subsection{Step-wise Objective and Step-level Policy Gradient}
\label{sec:theory-step}

Fix a denoising step index $t$.
Let $d_t^{\mathrm{old}}(\cdot\mid q)$ denote the distribution over intermediate states at step $t$ induced by rolling out the frozen behavior model $\theta_{\mathrm{old}}$ on prompt $q$.
Using the state--action view in \S~\ref{sec:method-states}, define
\begin{equation}
  J_{\mathrm{step},t}(\theta)
  :=
  \mathbb{E}_{q \sim \mathcal{D}}
  \mathbb{E}_{s_t \sim d_t^{\mathrm{old}}(\cdot\mid q)}
  \mathbb{E}_{a \sim \tilde{\pi}_\theta(\cdot \mid s_t)}
  \bigl[ R\bigl(q,\textsc{Fill}(x_t,a)\bigr) \bigr],
  \label{eq:Jt-def}
\end{equation}
where $s_t=(q,x_t)$ and $\textsc{Fill}(x_t,a)$ deterministically completes $x_t$ by filling its masked positions with $a$.
Let $\mathcal{L}_{\text{step}}^{(t)}(\theta)$ denote the corresponding unclipped step-wise loss at timestep $t$, computed as in Eq.~\ref{eq:inter-loss-method} with $Z$ same-state branches.

\begin{theorem}[Step-level policy gradient]\label{thm:step-pg}
Assume (i) $d_t^{\mathrm{old}}(\cdot\mid q)$ is independent of $\theta$, and (ii) $\tilde{\pi}_\theta(\cdot\mid s)$ is differentiable and normalized for each $s$.
Using the within-group mean baseline in Eq.~\ref{eq:group-adv}, in the unclipped likelihood-ratio setting the expected gradient of the step-wise loss is aligned with the policy gradient of $J_{\mathrm{step},t}(\theta)$:
\begin{equation}
  \mathbb{E}\bigl[-\nabla_\theta \mathcal{L}_{\text{step}}^{(t)}(\theta)\bigr]
  =
  \frac{Z-1}{Z}\,\nabla_\theta J_{\mathrm{step},t}(\theta).
  \label{eq:step-pg-main}
\end{equation}
\end{theorem}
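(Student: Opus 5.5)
The plan is to evaluate the gradient at $\theta=\theta_{\mathrm{old}}$, the point where the unclipped likelihood-ratio surrogate is used in practice. There the branches $a_1,\dots,a_Z$ are i.i.d.\ draws from $\tilde{\pi}_{\theta}(\cdot\mid s_t)$, and $\nabla_\theta\rho_z(\theta)\big|_{\theta=\theta_{\mathrm{old}}}=\nabla_\theta\log\tilde{\pi}_\theta(a_z\mid s_t)$. I will take expectations in the order $q\sim\mathcal{D}$, then $s_t\sim d_t^{\mathrm{old}}$, then the branches. Assumption (i) lets the gradient pass through the first two expectations, so it suffices to prove, for a fixed state $s=s_t$,
\[
\mathbb{E}_{a_{1:Z}}\Bigl[\tfrac{1}{Z}\textstyle\sum_z \nabla\log\tilde{\pi}_\theta(a_z\mid s)\,(R_z-\bar R)\Bigr]
=\tfrac{Z-1}{Z}\,\nabla_\theta\,\mathbb{E}_{a\sim\tilde{\pi}_\theta(\cdot\mid s)}\bigl[R(q,\textsc{Fill}(x_t,a))\bigr].
\]
Here $R_z=R(q,\textsc{Fill}(x_t,a_z))$. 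Since $\textsc{Fill}$ is deterministic, $R_z$ is a fixed function $f(a_z)$ of the action and does not depend on $\theta$.

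The computation proceeds in three steps.

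\textbf{Step 1.} Write $g_z=\nabla\log\tilde{\pi}_\theta(a_z\mid s)$ and expand $R_z-\bar R=(1-\tfrac1Z)R_z-\tfrac1Z\sum_{w\neq z}R_w$.

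\textbf{Step 2.} The diagonal term gives $\mathbb{E}[g_zR_z]=\nabla\mathbb{E}_a[f(a)]$ by the log-derivative trick. This uses assumption (ii): differentiability, and normalization so that $\sum_a\nabla\tilde{\pi}_\theta=\nabla 1$. The action space is finite, so exchanging the sum and the gradient is immediate.

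\textbf{Step 3.} For the cross terms with $w\neq z$, independence of the branches gives $\mathbb{E}[g_zR_w]=\mathbb{E}[g_z]\,\mathbb{E}[R_w]=0$, since $\mathbb{E}[g_z]=\nabla\sum_a\tilde{\pi}_\theta(a\mid s)=0$ by normalization.

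Averaging over $z$ then yields $(1-\tfrac1Z)\nabla J$ per state. Integrating over $s_t$ and $q$ recovers $\tfrac{Z-1}{Z}\nabla_\theta J_{\mathrm{step},t}(\theta)$.

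The main obstacle is making precise what ``expected gradient'' means when $\theta\neq\theta_{\mathrm{old}}$. In that case the samples come from $\theta_{\mathrm{old}}$, and one gets $\mathbb{E}_{\mathrm{old}}[\rho_z\,g_z(R_z-\bar R)]$. The diagonal term is still an importance-weighted gradient of $J$. However, the cross terms become $\mathbb{E}_{\mathrm{old}}[\rho_zg_z]\,\mathbb{E}_{\mathrm{old}}[R_w]=\nabla(1)\cdot\mathbb{E}_{\mathrm{old}}[R_w]=0$, which still vanishes. The diagonal term becomes $\mathbb{E}_{\mathrm{old}}[\rho_z g_z R_z]=\nabla J$ for all $\theta$. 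So I will state the identity for general $\theta$ via importance weighting, noting that $\mathbb{E}_{\mathrm{old}}[\nabla\rho_z\,h(a)]=\nabla\mathbb{E}_\theta[h]$ for any $\theta$-free $h$.

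A second subtlety is that $\tilde{\pi}_\theta$ is defined through $\mathbb{E}_m$ of log-probabilities, so it is not automatically normalized. I will simply invoke assumption (ii), which grants normalization. For the $1/|\mathcal{S}|$-type aggregation I will note that summing over the selected $k$ at a fixed $t$ only rescales by a constant.
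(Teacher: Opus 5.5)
Your proposal is correct and follows essentially the same argument as the paper. Both proofs move the gradient through the $\theta$-independent expectations over $q$ and $s_t$, apply the score-function identity, and split the within-group mean baseline: the $j=z$ part removes a $\tfrac{1}{Z}$ fraction of the policy gradient, and the $j\neq z$ cross terms vanish by independence and the zero-mean score. If anything, your handling of $\theta\neq\theta_{\mathrm{old}}$ is cleaner than the paper's. The paper transfers everything to $a_{1:Z}\sim\tilde{\pi}_\theta$ in a single ``importance weighting'' step, even though $\rho_z$ reweights only $a_z$ and not the other branches inside $\bar R$. Your direct computation $\mathbb{E}_{\mathrm{old}}[\rho_z g_z]\,\mathbb{E}_{\mathrm{old}}[R_w]=0$ instead shows the cross terms cancel under the behavior distribution itself.
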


\emph{Proof idea.}
Write the step-wise loss (Eq.~\ref{eq:inter-loss-method}) as a score-function estimator for $\tilde{\pi}_\theta(a\mid s_t)$ using action-only log-probabilities on the current mask.
The likelihood-ratio form gives the standard importance-weighted policy gradient; the within-group mean baseline induces only the constant factor $(Z-1)/Z$ in expectation and reduces variance.
Full details are in Appendix~\ref{app:proof-step-pg}.

\subsection{Mixed Objective and Overall Policy Gradient}
\label{sec:theory-mixed}
A key property of \method\ is that adding step-wise updates does not introduce an ad-hoc signal: in the \emph{unclipped, exact likelihood-ratio} setting, the expected gradient of the combined loss (Eq.~\ref{eq:total-loss}) equals the gradient of a single well-defined objective.

\textbf{Terminal surrogate objective.} Define the terminal (sequence-level) surrogate objective \begin{equation} J_{\text{seq}}(\theta) := \mathbb{E}_{q \sim \mathcal{D},\, o \sim \tilde{\pi}_\theta(\cdot \mid q)} \bigl[ R(q,o) \bigr]. \label{eq:Jseq-def} \end{equation} A standard likelihood-ratio argument implies that the unclipped terminal loss $\mathcal{L}_{\mathrm{base}}(\theta)$ is an unbiased policy-gradient estimator for $J_{\text{seq}}(\theta)$~\citep{shao2024deepseekmath}.

\textbf{Mixed objective.}
Let $c_Z := (Z-1)/Z$ denote the constant scaling induced by the within-group mean baseline in Eq.~\ref{eq:group-adv}.
With timestep sampling $\omega(t)$ and the step-wise objectives $\{J_{\mathrm{step},t}(\theta)\}$ (Eq.~\ref{eq:Jt-def}), define
\begin{equation}
J_{\text{mix}}(\theta)
  :=
  \alpha_{\text{step}}\,c_Z \sum_t \omega(t)\, J_{\mathrm{step},t}(\theta)
  + \alpha_{\mathrm{base}}\, J_{\text{seq}}(\theta).
  \label{eq:Jmix-def}
\end{equation}

\begin{theorem}[Overall policy gradient]\label{thm:mixed}
Under the assumptions of Theorem~\ref{thm:step-pg} (with the within-group mean baseline) and the standard likelihood-ratio assumptions for $J_{\text{seq}}(\theta)$,
\begin{equation}
  \mathbb{E}\bigl[-\nabla_\theta \mathcal{L}(\theta)\bigr]
  =
  \nabla_\theta J_{\text{mix}}(\theta),
  \label{eq:mixed-main}
\end{equation}
where $\mathcal{L}(\theta)=\alpha_{\text{step}}\mathcal{L}_{\text{step}}(\theta)+\alpha_{\mathrm{base}}\mathcal{L}_{\mathrm{base}}(\theta)$.
\end{theorem}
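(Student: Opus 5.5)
The proof is a linearity-of-expectation argument that reduces Theorem~\ref{thm:mixed} to Theorem~\ref{thm:step-pg} and to the standard likelihood-ratio identity for the terminal loss. Gradients are evaluated at $\theta=\theta_{\mathrm{old}}$, which is where the unclipped ratios are used at each update. The expectation on the left of Eq.~\ref{eq:mixed-main} is taken over prompts $q\sim\mathcal{D}$, rollouts under $\theta_{\mathrm{old}}$, timestep selection $t\sim\omega$, branches $a_{k,t,z}\sim\tilde{\pi}_{\theta_{\mathrm{old}}}(\cdot\mid s_{k,t})$, and prompt masks $m$. Since $\mathcal{L}=\alpha_{\text{step}}\mathcal{L}_{\text{step}}+\alpha_{\mathrm{base}}\mathcal{L}_{\mathrm{base}}$ and both gradient and expectation are linear, we have $\mathbb{E}[-\nabla\mathcal{L}]=\alpha_{\text{step}}\mathbb{E}[-\nabla\mathcal{L}_{\text{step}}]+\alpha_{\mathrm{base}}\mathbb{E}[-\nabla\mathcal{L}_{\mathrm{base}}]$. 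The two terms can therefore be treated separately, provided differentiation and expectation may be exchanged. That exchange is justified by bounded rewards and differentiability of $\tilde{\pi}_\theta$, i.e.\ assumption (ii).

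\textbf{Terminal term.} With advantages $A_k=r_k-\bar r_q$, differentiating $\rho_k$ gives $\nabla\rho_k=\rho_k\nabla\log\tilde{\pi}_\theta(o_k\mid q)$. Taking expectation over i.i.d.\ $o_k$ then reproduces the score-function form of $\nabla J_{\text{seq}}$. There is a subtlety here. Exactly as in Theorem~\ref{thm:step-pg}, the leave-in mean baseline multiplies this term by $(K-1)/K$ rather than leaving it unscaled. I would take the ``standard likelihood-ratio assumptions'' to mean that this factor is absorbed into $\alpha_{\mathrm{base}}$, or equivalently that $\mathcal{L}_{\mathrm{base}}$ is unbiased for $\nabla J_{\text{seq}}$ as the paper asserts. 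The result is $\mathbb{E}[-\nabla\mathcal{L}_{\mathrm{base}}]=\nabla J_{\text{seq}}$.

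\textbf{Step term.} Write $\mathcal{L}_{\text{step}}=\sum_{(k,t)\in\mathcal{S}}\mathcal{L}^{(k,t)}_{\text{step}}$. Condition on $q$ and on the selected timestep. Each selected state $s_{k,t}$ is distributed as $d_t^{\mathrm{old}}(\cdot\mid q)$, independently of $\theta$ by assumption (i). Its branches are i.i.d.\ from the frozen policy, so by Theorem~\ref{thm:step-pg} the conditional expected negative gradient equals $c_Z\nabla J_{\mathrm{step},t}$. Averaging over $t\sim\omega(t)$ gives $c_Z\sum_t\omega(t)\nabla J_{\mathrm{step},t}$. A further point concerns normalization: $\mathcal{S}_b$ contains $K$ trajectories, and there may be a batch average over prompts. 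I would fix the normalization by reading $\mathcal{L}_{\text{step}}$ as averaged over selected states, or equivalently by absorbing $|\mathcal{S}|$ into $\alpha_{\text{step}}$, so that the factor matches Eq.~\ref{eq:Jmix-def}.

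Summing the two terms gives $\nabla J_{\text{mix}}$, with $\omega$ and $\alpha$ constant in $\theta$. I expect the main obstacle to be bookkeeping rather than mathematics. First, the normalization and baseline constants must be matched to the definition of $J_{\text{mix}}$. Second, it must be checked that conditioning on the rollout is legitimate. The rollout is shared between the two terms, but linearity does not need independence between the terms, and within each term the sampling distributions are frozen at $\theta_{\mathrm{old}}$. So the only $\theta$-dependence is through the ratios, which yields exactly the score-function gradients.
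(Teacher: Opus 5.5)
Your proposal is correct and is essentially the paper's proof: linearity splits $\mathbb{E}[-\nabla_\theta\mathcal{L}]$ into the two terms, Theorem~\ref{thm:step-pg} conditioned on $t$ and averaged over $\omega(t)$ gives $c_Z\sum_t\omega(t)\nabla_\theta J_{\mathrm{step},t}$, and the standard likelihood-ratio identity gives $\nabla_\theta J_{\text{seq}}$. The two constants you flag are real and are skipped without comment in the paper's own argument---the leave-in mean $\bar r_q$ gives the base term a factor $(K-1)/K$ by exactly the computation in Theorem~\ref{thm:step-pg} (the paper appeals to a baseline independent of $o$, which $\bar r_q$ is not), and summing rather than averaging over the selected states multiplies the step term by $|\mathcal{S}|$---so absorbing them into $\alpha_{\mathrm{base}},\alpha_{\text{step}}$ is the right reading, while your restriction to $\theta=\theta_{\mathrm{old}}$ is unnecessary under the exact-$\mathbb{E}_m$ surrogate, since importance weighting makes the identity hold at every $\theta$.
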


\emph{Proof idea.}
Condition on a timestep $t$: Theorem~\ref{thm:step-pg} gives $\mathbb{E}[-\nabla_\theta \mathcal{L}_{\text{step}}^{(t)}(\theta)]=c_Z\,\nabla_\theta J_{\mathrm{step},t}(\theta)$.
Averaging over $t\sim\omega(t)$ yields the step term $\;c_Z\sum_t \omega(t)\nabla_\theta J_{\mathrm{step},t}(\theta)$.
Then add $\mathbb{E}[-\nabla_\theta \mathcal{L}_{\mathrm{base}}(\theta)]=\nabla_\theta J_{\text{seq}}(\theta)$ and use linearity of expectation.
See Appendix~\ref{app:proof-mixed} for details.

\subsection{Variance Reduction for the State-wise Estimator}
\label{sec:theory-var}
\method\ uses two simple mechanisms that reduce the variance of \method\'s state-wise gradient estimates: (i) restricting gradients to actionable (masked) tokens, and (ii) averaging over $Z$ candidates per state via cached-logit resampling.

\textbf{Token-local updates.}
Fix a state $s=(q,x)$ of length $L$ with masked set $M$ of size $m=|M|$, and let $R$ be the terminal reward of the resulting completion.
Let $g_i := \nabla_\theta \log \tilde{\pi}_\theta(a_i\mid s,i)$ denote a per-position score term.
Compare an estimator that propagates through all positions,
$\hat g_{\text{full}} := \sum_{i=1}^L g_i\,R$,
to the action-only estimator,
$\hat g_{\text{sub}} := \sum_{i\in M} g_i\,R$.

\begin{proposition}[Variance reduction by partial updates]
\label{prop:var}
Assume $\{g_i\}$ are independent, zero-mean with $\mathrm{Var}[g_i]=\sigma^2$, and $R$ is independent of $\{g_i\}$ with $\mathbb{E}[R^2]<\infty$.
Then
\[
  \mathrm{Var}[\hat g_{\text{sub}}]
  \le
  \frac{m}{L}\,\mathrm{Var}[\hat g_{\text{full}}].
\]
\end{proposition}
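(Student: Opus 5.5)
The plan is a direct second-moment computation. Both $\hat g_{\text{full}}$ and $\hat g_{\text{sub}}$ have the form $S_A R$ with $S_A := \sum_{i\in A} g_i$, where $A=[L]$ or $A=M$. We first compute the mean and variance of such a product under the stated assumptions, and then compare the two cases. Since the $g_i$ may be vector-valued (they are gradients), we interpret $\mathrm{Var}$ as the trace of the covariance, i.e. $\mathbb{E}\|\cdot - \mathbb{E}\cdot\|^2$. We read $\mathrm{Var}[g_i]=\sigma^2$ in the same trace sense. Everything below is linear in this convention, so the scalar case is identical.

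\textbf{Step 1 (mean).} Because $R$ is independent of $\{g_i\}$ and each $g_i$ has zero mean,
\[
\mathbb{E}[S_A R]=\mathbb{E}[R]\sum_{i\in A}\mathbb{E}[g_i]=0 .
\]
Hence the variance equals the second moment: $\mathrm{Var}[S_A R]=\mathbb{E}\bigl[\|S_A\|^2 R^2\bigr]$.

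\textbf{Step 2 (second moment).} Using independence of $R$ from $S_A$, this factors as $\mathbb{E}[R^2]\,\mathbb{E}\|S_A\|^2$; finiteness of $\mathbb{E}[R^2]$ makes the product well defined. Expanding the square, the cross terms are $\mathbb{E}\langle g_i,g_j\rangle=\langle \mathbb{E} g_i,\mathbb{E} g_j\rangle=0$ for $i\neq j$, by mutual independence and zero means. Therefore
\[
\mathbb{E}\|S_A\|^2=\sum_{i\in A}\mathbb{E}\|g_i\|^2=|A|\,\sigma^2 .
\]
This gives
\[
\mathrm{Var}[\hat g_{\text{sub}}]=m\sigma^2\,\mathbb{E}[R^2],\qquad
\mathrm{Var}[\hat g_{\text{full}}]=L\sigma^2\,\mathbb{E}[R^2].
\]

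\textbf{Step 3 (comparison).} Dividing the two expressions gives $\mathrm{Var}[\hat g_{\text{sub}}]=\frac{m}{L}\mathrm{Var}[\hat g_{\text{full}}]$. This is equality, which in particular yields the claimed $\le$. If $\mathbb{E}[R^2]=0$ or $\sigma=0$, both sides are zero and the bound holds trivially.

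\textbf{Main obstacle.} There is no hard step; the only delicate point is bookkeeping. One must use the independence of $R$ from all the $g_i$ jointly to factor $\mathbb{E}[\|S_A\|^2R^2]$, since pairwise independence of $R$ with each $g_i$ separately would not suffice. One must also interpret variance consistently for vector-valued scores. I would state these conventions explicitly at the start of the proof.
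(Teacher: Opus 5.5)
Your proof is correct and follows essentially the same direct second-moment computation as the paper, obtaining the exact identity $\mathrm{Var}[\hat g_{\text{sub}}]=\frac{m}{L}\,\mathrm{Var}[\hat g_{\text{full}}]$ with both sides expressed through $|A|\,\sigma^2\,\mathbb{E}[R^2]$. Your treatment of the cross terms, factoring out $R^2$ via joint independence before expanding $\mathbb{E}\|S_A\|^2$, is slightly more careful than the paper's appeal to ``independence across positions.'' The terms $g_iR$ and $g_jR$ share $R$, so they are only uncorrelated, not independent.
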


\emph{Proof idea.}
Under the assumptions, the variance of the sum scales linearly with the number of included positions, giving the factor $m/L$.
See Appendix~\ref{app:proof-partial}.

\textbf{Same-state averaging via cached-logit branching.}
At a fixed state $s$, \method\ can resample $Z$ independent branches and average their gradient contributions.
With any baseline $b(s)$ that is independent of each sampled action, the variance of the averaged estimator decreases at the standard $O(1/Z)$ rate.
Cached-logit branching avoids additional \emph{rollout} forward passes to obtain these $Z$ samples.

\begin{proposition}[Variance vs.\ number of branches]
\label{prop:compute}
Fix a state $s$ and draw i.i.d.\ actions $a_1,\dots,a_Z \sim \tilde{\pi}_\theta(\cdot\mid s)$ with rewards $R_z$.
Let $b(s)$ be any baseline independent of each $a_z$, and define the per-branch contribution
$\hat g_z := (R_z-b(s))\,\nabla_\theta \log \tilde{\pi}_\theta(a_z\mid s)$ with finite variance.
Let $\hat g := \frac{1}{Z}\sum_{z=1}^Z \hat g_z$ be the averaged estimator.
Then $\mathrm{Var}[\hat g] = O(1/Z)$.
\end{proposition}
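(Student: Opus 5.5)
The plan is to treat the $Z$ branch contributions as i.i.d.\ random vectors and apply the standard variance computation for a sample mean. We interpret $\mathrm{Var}[\cdot]$ for vector-valued gradients as the trace of the covariance, $\mathrm{Var}[v]:=\mathbb{E}\|v-\mathbb{E}v\|^2$, matching the $\mathrm{trCov}$ quantity measured in \S~\ref{sec:ablation-variance}. Everything is conditioned on the fixed state $s$, so $b(s)$ is a constant with respect to the sampling of the actions.

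First, we observe that since $a_1,\dots,a_Z$ are i.i.d.\ from $\tilde{\pi}_\theta(\cdot\mid s)$, the reward is $R_z=R(q,\textsc{Fill}(x,a_z))$, and $b(s)$ is fixed given $s$, each $\hat g_z$ is the same measurable function applied to $a_z$. Hence the $\hat g_z$ are i.i.d., with common mean $\mu$ and common trace-variance $\sigma_g^2:=\mathrm{Var}[\hat g_1]<\infty$ by assumption.

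Second, we expand
\[
\mathrm{Var}[\hat g]=\frac{1}{Z^2}\Bigl(\sum_{z}\mathrm{Var}[\hat g_z]+\sum_{z\neq z'}\mathrm{tr}\,\mathrm{Cov}(\hat g_z,\hat g_{z'})\Bigr).
\]
Independence kills the cross terms, leaving $\mathrm{Var}[\hat g]=\sigma_g^2/Z=O(1/Z)$. The constant is explicit and independent of $Z$.

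The only delicate point is the meaning of ``baseline independent of each $a_z$.'' The conclusion holds if $b(s)$ is deterministic given $s$, or more generally if $b$ is drawn independently of all of $a_{1:Z}$, by conditioning on $b$ and applying the tower rule. It does \emph{not} literally cover the leave-in group mean $\bar R$ used in Eq.~\ref{eq:group-adv}, which couples the branches. If one wants to cover that case too, the approach is to write $R_z-\bar R=\frac{Z-1}{Z}\bigl(R_z-\bar R_{-z}\bigr)$, where $\bar R_{-z}$ is the leave-one-out mean. Then $\hat g=\frac{Z-1}{Z}\cdot\frac1Z\sum_z (R_z-\bar R_{-z})\nabla\log\tilde\pi_\theta(a_z\mid s)$, which is a U-statistic of order two. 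Its variance is $O(1/Z)$ by Hoeffding's variance formula for U-statistics, provided the second moments $\mathbb{E}\bigl[R^2\|\nabla\log\tilde\pi_\theta\|^2\bigr]$ are finite. The main proof, however, is just the i.i.d.\ computation above, and the U-statistic remark is an optional extension.
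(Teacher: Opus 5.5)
Your core argument is the same as the paper's: with $b(s)$ fixed given $s$, the $\hat g_z$ are i.i.d., the cross-covariances vanish, and $\mathrm{tr}\,\mathrm{Cov}[\hat g]=\frac{1}{Z}\,\mathrm{tr}\,\mathrm{Cov}[\hat g_1]$, which is exactly the appendix's scalar and trace-of-covariance computation. Your U-statistic remark is a correct strengthening the paper does not attempt (the paper only notes that the within-group mean couples the branches and defers to the empirical measurement), since the symmetric kernel $\frac{1}{2}\bigl(R(a)-R(a')\bigr)\bigl(\nabla_\theta\log\tilde\pi_\theta(a\mid s)-\nabla_\theta\log\tilde\pi_\theta(a'\mid s)\bigr)$ plus Hoeffding's formula gives $O(1/Z)$ for the baseline actually used; two small caveats are that this also needs $\mathbb{E}[R^2]\,\mathbb{E}\|\nabla_\theta\log\tilde\pi_\theta(a\mid s)\|^2<\infty$ (automatic for a finite action space), and that your random-$b$ aside relies on $\mathbb{E}[\nabla_\theta\log\tilde\pi_\theta(a\mid s)]=0$ to make the between-$b$ variance term vanish.
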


\emph{Proof idea.}
The estimator is the average of $Z$ i.i.d.\ per-branch terms under an action-independent baseline, so its variance shrinks by $1/Z$.
See Appendix~\ref{app:proof-compute}.

\noindent\textbf{Remark (group-relative baseline used in practice).}
Our implementation uses the within-group mean baseline (Eq.~\ref{eq:group-adv}), which introduces dependence across branches but typically further reduces variance in practice; we empirically verify variance reduction from increasing $Z$ (\S~\ref{sec:experiments}).

\section{Proofs}\label{app:proofs}

\subsection{Proof of Theorem~\ref{thm:step-pg}}
\label{app:proof-step-pg}

We prove that the (unclipped) step-level loss yields a principled (scaled) policy-gradient estimator for the step-wise objective $J_{\mathrm{step},t}(\theta)$ when treating the surrogate $\tilde{\pi}_\theta$ as the policy.

\textbf{Setup.}
Fix a timestep index $t$. Recall the step-wise objective (Eq.~\ref{eq:Jt-def})
\[
  J_{\mathrm{step},t}(\theta)
  =
  \mathbb{E}_{q \sim \mathcal{D}}
  \mathbb{E}_{s_t \sim d_t^{\mathrm{old}}(\cdot\mid q)}
  \mathbb{E}_{a \sim \tilde{\pi}_\theta(\cdot \mid s_t)}
  \bigl[ R\bigl(q,\textsc{Fill}(x_t,a)\bigr) \bigr],
\]
where $s_t=(q,x_t)$ and $a$ fills the masked positions at $s_t$.
For brevity, define the terminal reward as a function of $(s_t,a)$:
\[
  \mathcal{R}(s_t,a) := R\bigl(q,\textsc{Fill}(x_t,a)\bigr).
\]

\textbf{Policy-gradient form of $\nabla_\theta J_{\mathrm{step},t}(\theta)$.}
Under Assumption~(i) of Theorem~\ref{thm:step-pg}, the state distribution
$d_t^{\mathrm{old}}(\cdot\mid q)$ is induced by the frozen behavior model $\theta_{\text{old}}$ and
is independent of $\theta$, so we may move the gradient through the outer
expectations:
\begin{align}
\nabla_\theta J_{\mathrm{step},t}(\theta)
&=
\mathbb{E}_{q \sim \mathcal{D}}
\mathbb{E}_{s_t \sim d_t^{\mathrm{old}}(\cdot\mid q)}
\left[
\nabla_\theta
\mathbb{E}_{a \sim \tilde{\pi}_\theta(\cdot \mid s_t)}
\bigl[\mathcal{R}(s_t,a)\bigr]
\right].
\label{eq:app-Jt-movegrad}
\end{align}
Under Assumption~(ii) (differentiable, normalized $\tilde{\pi}_\theta$), the
score-function identity gives
\[
\nabla_\theta
\mathbb{E}_{a \sim \tilde{\pi}_\theta(\cdot \mid s_t)}
\bigl[\mathcal{R}(s_t,a)\bigr]
=
\mathbb{E}_{a \sim \tilde{\pi}_\theta(\cdot \mid s_t)}
\bigl[\mathcal{R}(s_t,a)\,\nabla_\theta \log \tilde{\pi}_\theta(a \mid s_t)\bigr].
\]
Substituting into Eq.~\ref{eq:app-Jt-movegrad} yields
\begin{equation}
\nabla_\theta J_{\mathrm{step},t}(\theta)
=
\mathbb{E}_{q \sim \mathcal{D}}
\mathbb{E}_{s_t \sim d_t^{\mathrm{old}}(\cdot\mid q)}
\mathbb{E}_{a \sim \tilde{\pi}_\theta(\cdot \mid s_t)}
\bigl[\mathcal{R}(s_t,a)\,\nabla_\theta \log \tilde{\pi}_\theta(a \mid s_t)\bigr].
\label{eq:app-Jt-pg}
\end{equation}

\textbf{Expected gradient of the step-level loss.}
At a selected state $s_{k,t}$, DiSPO draws $Z$ branched action samples
$a_{k,t,1:Z}$ from the frozen behavior policy $\tilde{\pi}_{\theta_{\text{old}}}(\cdot\mid s_{k,t})$
and evaluates $\mathcal{R}(s_{k,t},a_{k,t,z})$ for each branch.
Ignoring clipping (as stated in the theorem), the per-state step loss is
\[
  \mathcal{L}_{\text{step}}^{(k,t)}(\theta)
  =
  -\frac{1}{Z}\sum_{z=1}^Z
  \rho_{k,t,z}(\theta)\,A_{k,t,z},
  \qquad
  \rho_{k,t,z}(\theta)
  :=
  \frac{\tilde{\pi}_\theta(a_{k,t,z}\mid s_{k,t})}{\tilde{\pi}_{\theta_{\text{old}}}(a_{k,t,z}\mid s_{k,t})},
\]
where $A_{k,t,z}$ is the group-relative advantage using the within-group mean baseline (Eq.~\ref{eq:group-adv}), i.e., $A_{k,t,z}=R_{k,t,z}-\bar R_{k,t}$ with $\bar R_{k,t}=\frac{1}{Z}\sum_{j=1}^Z R_{k,t,j}$.
Since $\tilde{\pi}_{\theta_{\text{old}}}$ is frozen, $\nabla_\theta \rho_{k,t,z}(\theta)
=
\rho_{k,t,z}(\theta)\,\nabla_\theta \log \tilde{\pi}_\theta(a_{k,t,z}\mid s_{k,t})$.
Therefore,
\begin{equation}
-\nabla_\theta \mathcal{L}_{\text{step}}^{(k,t)}(\theta)
=
\frac{1}{Z}\sum_{z=1}^Z
\rho_{k,t,z}(\theta)\,A_{k,t,z}\,\nabla_\theta \log \tilde{\pi}_\theta(a_{k,t,z}\mid s_{k,t}).
\label{eq:app-step-grad}
\end{equation}

Taking expectation over the branched samples $a_{k,t,1:Z}\sim \tilde{\pi}_{\theta_{\text{old}}}(\cdot\mid s_{k,t})$
and using importance weighting yields
\begin{align}
\mathbb{E}\bigl[-\nabla_\theta \mathcal{L}_{\text{step}}^{(k,t)}(\theta)\mid q,s_{k,t}\bigr]
&=
\mathbb{E}_{a_{1:Z} \sim \tilde{\pi}_\theta(\cdot\mid s_{k,t})}
\left[
\frac{1}{Z}\sum_{z=1}^Z (R_z-\bar R)\,\nabla_\theta \log \tilde{\pi}_\theta(a_z\mid s_{k,t})
\right],
\label{eq:app-step-exp1-new}
\end{align}
where $R_z=\mathcal{R}(s_{k,t},a_z)$ and $\bar R=\frac{1}{Z}\sum_{j=1}^Z R_j$.
Expanding the baseline term,
\[
\mathbb{E}\!\left[\frac{1}{Z}\sum_{z=1}^Z \bar R\,\nabla_\theta \log \tilde{\pi}_\theta(a_z\mid s_{k,t})\right]
=
\frac{1}{Z^2}\sum_{z=1}^Z\sum_{j=1}^Z
\mathbb{E}\!\left[R_j\,\nabla_\theta \log \tilde{\pi}_\theta(a_z\mid s_{k,t})\right].
\]
For $j\neq z$, $R_j$ is independent of $a_z$ and
$\mathbb{E}[\nabla_\theta \log \tilde{\pi}_\theta(a_z\mid s_{k,t})]=0$, so cross terms vanish.
Only the $j=z$ terms remain, yielding
\[
\mathbb{E}\!\left[\frac{1}{Z}\sum_{z=1}^Z \bar R\,\nabla_\theta \log \tilde{\pi}_\theta(a_z\mid s_{k,t})\right]
=
\frac{1}{Z}\,
\mathbb{E}_{a \sim \tilde{\pi}_\theta(\cdot\mid s_{k,t})}
\bigl[\mathcal{R}(s_{k,t},a)\,\nabla_\theta \log \tilde{\pi}_\theta(a\mid s_{k,t})\bigr].
\]
Therefore,
\[
\mathbb{E}\bigl[-\nabla_\theta \mathcal{L}_{\text{step}}^{(k,t)}(\theta)\mid q,s_{k,t}\bigr]
=
\frac{Z-1}{Z}\,
\mathbb{E}_{a \sim \tilde{\pi}_\theta(\cdot\mid s_{k,t})}
\bigl[\mathcal{R}(s_{k,t},a)\,\nabla_\theta \log \tilde{\pi}_\theta(a\mid s_{k,t})\bigr].
\]
Taking outer expectations over $q\sim\mathcal{D}$ and $s_{k,t}\sim d_t^{\mathrm{old}}(\cdot\mid q)$ yields
\[
\mathbb{E}\bigl[-\nabla_\theta \mathcal{L}_{\text{step}}^{(t)}(\theta)\bigr]
=
\frac{Z-1}{Z}\,\nabla_\theta J_{\mathrm{step},t}(\theta),
\]
which matches Theorem~\ref{thm:step-pg}. \qed

\textbf{Remark on the group baseline.}
If a leave-one-out baseline is used (independent of each branch action), the estimator is unbiased.
With the within-group mean baseline in Eq.~\ref{eq:group-adv} used in our implementation, the expected gradient is scaled by the constant factor $(Z-1)/Z$, which can be absorbed into $\alpha_{\text{step}}$ or the learning rate.

\subsection{Proof of Theorem~\ref{thm:mixed}}
\label{app:proof-mixed}

We prove that the expected gradient of the combined loss equals the gradient of the mixed objective.
Throughout, we treat the masked-token surrogate $\tilde{\pi}_\theta$ as the policy being optimized and work in the unclipped setting stated in \S~\ref{sec:theory-mixed}.

\textbf{Recall the mixed objective and combined loss.}
Let $c_Z := (Z-1)/Z$ denote the constant scaling induced by the within-group mean baseline in the step-wise estimator.
The mixed objective (Eq.~\ref{eq:Jmix-def}) is
\[
  J_{\text{mix}}(\theta)
  =
  \alpha_{\text{step}}\,c_Z\sum_t \omega(t)\,J_{\mathrm{step},t}(\theta)
  +
  \alpha_{\mathrm{base}}\,J_{\text{seq}}(\theta),
\]
where $J_{\mathrm{step},t}(\theta)$ is the step-wise objective (Eq.~\ref{eq:Jt-def}) and $J_{\text{seq}}(\theta)$ is the terminal surrogate objective (Eq.~\ref{eq:Jseq-def}).
The combined training loss is
\[
  \mathcal{L}(\theta)
  =
  \alpha_{\text{step}}\,\mathcal{L}_{\text{step}}(\theta)
  +
  \alpha_{\mathrm{base}}\,\mathcal{L}_{\mathrm{base}}(\theta).
\]
By linearity of differentiation,
\begin{equation}
  \nabla_\theta J_{\text{mix}}(\theta)
  =
  \alpha_{\text{step}}\,c_Z\sum_t \omega(t)\,\nabla_\theta J_{\mathrm{step},t}(\theta)
  +
  \alpha_{\mathrm{base}}\,\nabla_\theta J_{\text{seq}}(\theta).
  \label{eq:app-mixed-grad-decomp}
\end{equation}
It therefore suffices to show that
\(
\mathbb{E}\!\left[-\nabla_\theta \mathcal{L}_{\text{step}}(\theta)\right]
=
c_Z\sum_t \omega(t)\,\nabla_\theta J_{\mathrm{step},t}(\theta)
\)
and
\(
\mathbb{E}\!\left[-\nabla_\theta \mathcal{L}_{\mathrm{base}}(\theta)\right]
=
\nabla_\theta J_{\text{seq}}(\theta).
\)

\textbf{Step term.}
Recall that $\mathcal{L}_{\text{step}}(\theta)$ is computed by selecting timesteps according to $\omega(t)$ and applying the per-state step loss $\mathcal{L}_{\text{step}}^{(k,t)}(\theta)$ (Eq.~\ref{eq:inter-loss-method}) on the corresponding intermediate state(s).
Condition on a particular timestep $t$ being selected.
Under the assumptions of Theorem~\ref{thm:step-pg}, we have
\[
  \mathbb{E}\!\left[-\nabla_\theta \mathcal{L}_{\text{step}}(\theta)\,\middle|\, t\right]
  =
  c_Z\,\nabla_\theta J_{\mathrm{step},t}(\theta),
\]
where the expectation is over $q\sim\mathcal{D}$, $s_t\sim d_t^{\mathrm{old}}(\cdot\mid q)$, and the branched action samples at that state.
Taking expectation over $t\sim\omega(t)$ yields
\begin{equation}
\mathbb{E}\!\left[-\nabla_\theta \mathcal{L}_{\text{step}}(\theta)\right]
  =
  c_Z\sum_t \omega(t)\,\nabla_\theta J_{\mathrm{step},t}(\theta).
  \label{eq:app-step-term}
\end{equation}

\textbf{Terminal term.}
By definition,
\[
  J_{\text{seq}}(\theta)
  =
  \mathbb{E}_{q\sim\mathcal{D}}
  \mathbb{E}_{o\sim\tilde{\pi}_\theta(\cdot\mid q)}
  \bigl[R(q,o)\bigr].
\]
A standard score-function (REINFORCE) argument gives
\[
  \nabla_\theta J_{\text{seq}}(\theta)
  =
  \mathbb{E}_{q,o\sim\tilde{\pi}_\theta}
  \bigl[
    R(q,o)\,\nabla_\theta \log \tilde{\pi}_\theta(o\mid q)
  \bigr],
\]
and subtracting any baseline that depends on $q$ but not on $o$ preserves unbiasedness.
The terminal GRPO loss $\mathcal{L}_{\mathrm{base}}(\theta)$ is precisely the corresponding baseline-centered (and, in practice, importance-weighted) estimator; in the unclipped setting,
\begin{equation}
  \mathbb{E}\!\left[-\nabla_\theta \mathcal{L}_{\mathrm{base}}(\theta)\right]
  =
  \nabla_\theta J_{\text{seq}}(\theta),
  \label{eq:app-term-term}
\end{equation}
which is the standard PPO/GRPO identity (see, e.g.,~\citet{shao2024deepseekmath}).

\textbf{Combine the terms.}
Combining Eq.~\ref{eq:app-step-term} and Eq.~\ref{eq:app-term-term} and using linearity of expectation,
\[
\mathbb{E}\!\left[-\nabla_\theta \mathcal{L}(\theta)\right]
=
\alpha_{\text{step}}\,c_Z \sum_t \omega(t)\,\nabla_\theta J_{\mathrm{step},t}(\theta)
+
\alpha_{\mathrm{base}}\,\nabla_\theta J_{\text{seq}}(\theta)
=
\nabla_\theta J_{\text{mix}}(\theta),
\]
where the last equality follows from Eq.~\ref{eq:app-mixed-grad-decomp} with the definition of $J_{\text{mix}}(\theta)$ above.
This proves Theorem~\ref{thm:mixed}. \qed

\textbf{Remark (KL-regularized case).}
If the main text includes a KL penalty, the same argument applies after adding the term
$-\lambda\,\mathbb{E}_{q}[\mathrm{KL}(\tilde{\pi}_\theta(\cdot\mid q)\,\|\,\tilde{\pi}_{\text{ref}}(\cdot\mid q))]$
to $J_{\text{mix}}(\theta)$ and the corresponding penalty $\lambda\,\mathcal{L}_{\text{KL}}(\theta)$ to $\mathcal{L}(\theta)$; differentiation under the expectation yields the additional gradient term.

\subsection{Proof of Proposition~\ref{prop:var}}
\label{app:proof-partial}

We prove the variance reduction claim for the token-local (action-only) estimator under the stated independence assumptions.

\textbf{Setup.}
Fix a diffusion state $s=(q,x)$ of length $L$ with masked set $M\subseteq[L]$ of size $m=|M|$.
Let $R$ denote the (scalar) terminal reward associated with the filled candidate, and define per-position score terms
\[
g_i \;:=\; \nabla_\theta \log \tilde{\pi}_\theta(a_i \mid s,i),
\]
where $a_i$ is the token sampled at position $i$ (for $i\in M$) and $\tilde{\pi}_\theta(\cdot\mid s,i)$ is the one-step surrogate distribution at position $i$.
Consider the two estimators (as in \S~\ref{sec:theory-var})
\[
\hat g_{\text{full}} \;=\; \sum_{i=1}^L g_i\,R,
\qquad
\hat g_{\text{sub}} \;=\; \sum_{i\in M} g_i\,R.
\]

\textbf{Variance calculation.}
Under the assumptions of Proposition~\ref{prop:var}, $\{g_i\}_{i=1}^L$ are independent, zero-mean with $\mathrm{Var}[g_i]=\sigma^2$, and $R$ is independent of all $g_i$ with $\mathbb{E}[R^2]<\infty$.
First, independence across positions implies
\[
\mathrm{Var}\!\left[\sum_{i\in S} g_i R\right]
=
\sum_{i\in S} \mathrm{Var}[g_i R]
\quad\text{for any index set } S\subseteq[L].
\]
Second, since $g_i$ is independent of $R$ and $\mathbb{E}[g_i]=0$,
\[
\mathrm{Var}[g_i R]
=
\mathbb{E}[g_i^2 R^2] - (\mathbb{E}[g_i R])^2
=
\mathbb{E}[g_i^2]\mathbb{E}[R^2]
=
\sigma^2\,\mathbb{E}[R^2].
\]
Therefore,
\[
\mathrm{Var}[\hat g_{\text{full}}]
=
\sum_{i=1}^L \sigma^2\mathbb{E}[R^2]
=
L\,\sigma^2\mathbb{E}[R^2],
\qquad
\mathrm{Var}[\hat g_{\text{sub}}]
=
\sum_{i\in M} \sigma^2\mathbb{E}[R^2]
=
m\,\sigma^2\mathbb{E}[R^2].
\]
Taking the ratio yields
\[
\mathrm{Var}[\hat g_{\text{sub}}]
=
\frac{m}{L}\,\mathrm{Var}[\hat g_{\text{full}}],
\]
which implies the stated inequality in Proposition~\ref{prop:var} since $m\le L$.
\qed

\textbf{Remark (vector-valued gradients).}
If each $g_i$ is vector-valued, the same argument applies coordinate-wise.
Equivalently, writing $\mathrm{tr}\,\mathrm{Cov}[\cdot]$ for the trace of the covariance matrix, independence implies
$\mathrm{tr}\,\mathrm{Cov}[\hat g_{\text{sub}}] = \frac{m}{L}\,\mathrm{tr}\,\mathrm{Cov}[\hat g_{\text{full}}]$
under the same assumptions.

\subsection{Proof of Proposition~\ref{prop:compute}}
\label{app:proof-compute}

We prove the $O(1/Z)$ variance reduction from averaging $Z$ (approximately) independent branches at a fixed state.

\textbf{Setup.}
Fix a state $s$ and draw i.i.d.\ actions $a_1,\dots,a_Z \sim \tilde{\pi}_\theta(\cdot\mid s)$ with associated rewards $R_z$.
Let $b(s)$ be any baseline independent of each sampled action $a_z$, and define the per-branch contribution
\[
  \hat g_z \;:=\; (R_z-b(s))\,\nabla_\theta \log \tilde{\pi}_\theta(a_z\mid s).
\]
Define the averaged estimator
\[
  \hat g \;:=\; \frac{1}{Z}\sum_{z=1}^Z \hat g_z,
\]
and assume $\hat g_z$ have finite variance.

\textbf{Variance of an average.}
For the scalar case, independence gives
\[
\mathrm{Var}[\hat g]
=
\mathrm{Var}\!\left[\frac{1}{Z}\sum_{z=1}^Z \hat g_z\right]
=
\frac{1}{Z^2}\sum_{z=1}^Z \mathrm{Var}[\hat g_z]
=
\frac{1}{Z}\,\mathrm{Var}[\hat g_z],
\]
where $\mathrm{Var}[\hat g_z]$ is the same for all $z$ since $\{\hat g_z\}_{z=1}^Z$ are i.i.d.
Thus $\mathrm{Var}[\hat g]=O(1/Z)$.

For the vector-valued case, the same argument holds for covariance matrices. Under independence,
\[
\mathrm{Cov}[\hat g]
=
\mathrm{Cov}\!\left[\frac{1}{Z}\sum_{z=1}^Z \hat g_z\right]
=
\frac{1}{Z^2}\sum_{z=1}^Z \mathrm{Cov}[\hat g_z]
=
\frac{1}{Z}\,\mathrm{Cov}[\hat g_z],
\]
where the last equality uses that $\{\hat g_z\}_{z=1}^Z$ are i.i.d.\ and hence have identical covariance.
Therefore $\mathrm{tr}\,\mathrm{Cov}[\hat g] = \frac{1}{Z}\,\mathrm{tr}\,\mathrm{Cov}[\hat g_z] = O(1/Z)$.
\qed

\textbf{Remark (group-relative baseline used in practice).}
Our implementation uses the within-group mean baseline (Eq.~\ref{eq:group-adv}), which introduces dependence across branches.
Proposition~\ref{prop:compute} formalizes the standard $1/Z$ reduction for action-independent baselines; empirically, increasing $Z$ reduces variance in our setting as well (\S~\ref{sec:experiments}).

\section{Reward Curves}\label{reward-curve}
Figure~\ref{fig:qualitative-2x2} shows that terminal reward curves (\emph{top}) and step reward curves (\emph{bottom}) on LLaDA-8B-Instruct during policy optimization at the experiments (\S~\ref{sec:experiments}). 
Across tasks, \method\ reaches higher terminal rewards earlier and maintains them over training. Step rewards exhibit relatively smaller magnitudes but follow trends as terminal rewards, indicating their role as a complementary training signal.

\begin{figure*}[h]
  \centering
  \includegraphics[width=\textwidth]{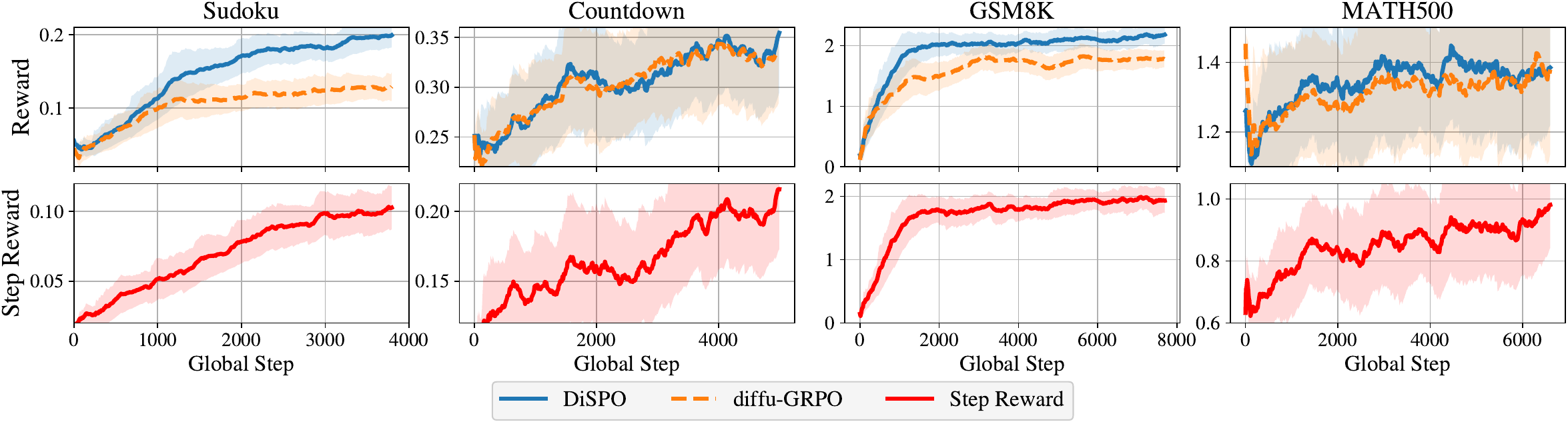}
  \caption{\textbf{Reward curves.} }
  \label{fig:qualitative-2x2}
\end{figure*}

\section{Experimental Details}
\label{app:exp}
This appendix provides additional details on benchmarks, reward functions, evaluation protocols, and training/implementation settings.
We follow the d1/diffu-GRPO setup where applicable~\citep{zhao2025d1}, reusing its SFT procedure and programmatic reward evaluators.
For policy optimization, we instantiate the terminal-feedback base objective with diffu-GRPO and SPG, and compare each baseline against its \method-augmented variant under matched compute-relevant quantities.
Run-specific hyperparameters, optimizer schedules, seeds, and infrastructure settings will be provided in the released configuration files.

\subsection{Benchmarks, Evaluation Protocol, and Reward Functions}
\label{app:exp:benchmark}
We evaluate on four reasoning benchmarks following \citet{zhao2025d1}.
For each task, we report exact-match accuracy and use a programmatic verifier to compute the scalar terminal reward $R(q,o)$ from prompt $q$ and model completion $o$.
The reward functions are task-specific and reused from the d1/diffu-GRPO implementation without modification: \url{https://github.com/dllm-reasoning/d1}.
Importantly, \method\ uses the same terminal reward evaluator for both the sequence-level/base objective and the state-wise objective; we do not introduce additional reward shaping or custom step-wise rewards.

\textbf{GSM8K.}
GSM8K~\citep{gsm8k} is a grade-school math word-problem benchmark.
It uses exact match on the final numeric answer after applying the same answer-extraction and normalization rules.
We use the training data publicly available \url{https://huggingface.co/datasets/openai/gsm8k}.
\emph{Reward} is computed by considering multiple axes, i.e., format reward (max. $1.625$) and correctness reward (max. $2.0$).

\textbf{MATH500.}
MATH500~\citep{lightman2023lets} is a subset of MATH focusing on competition-level problems.
\emph{Reward} is computed by considering the two axes, i.e., format reward (max $1.0$) and correctness reward (max $2.0$)

\textbf{Sudoku.}
4$\times$4 Sudoku tasks is synthetic benchmark for planning.
We use the training data publicly available \url{https://github.com/Black-Phoenix/4x4-Sudoku-Dataset}.
As for the evaluation data, we use the synthetically created problem set by using specific implementation \url{https://www.ocf.berkeley.edu/~arel/sudoku/main.html}.
Reward is the fraction of empty cells filled correctly.

\textbf{Countdown.}
Countdown is an arithmetic-planning benchmark, whose training data publicly available \url{https://huggingface.co/datasets/Jiayi-Pan/Countdown-Tasks-3to4}.
\emph{Evaluation} parses the output into an arithmetic expression, verifies that it uses the provided numbers under the task rules, and checks that the expression evaluates exactly to the target value.
\emph{Reward} is computed by considering the two axes, i.e., nearly-correct answers obtain $0.1$ and the perfect answers gain $1.0$.

\subsection{Inference Details}
\label{app:exp:inference}
For inference, we use semi-autoregressive decoding with 32-token blocks~\cite{arriolablock}, unmasking the 2 highest-confidence tokens per step within each block.
We evaluate at $N_{\text{gen}}\in\{128,256,512\}$ using zero-shot prompting and greedy decoding.

\subsection{Training and Implementation Details}
\label{app:exp:setting}
\textbf{Model variants and SFT.}
We evaluate two model variants: LLaDA-8B-Instruct~\citep{nie2025largelanguagediffusionmodels} and its s1k-supervised fine-tuned variant~\citep{s1k}.
For the s1k variant, we reuse the SFT recipe from d1, including the data, schedule, and hyperparameters: \url{https://github.com/dllm-reasoning/d1}.
We keep the architecture, tokenizer, and denoising schedule fixed unless explicitly varied in diagnostic experiments.

\textbf{Base optimizers and matched budgets.}
We instantiate the terminal-feedback base objective with diffu-GRPO~\citep{zhao2025d1} and SPG~\citep{spg}.
For diffu-GRPO, we follow the d1 training recipe.
For SPG, we implement the SPG objective in the d1 codebase so that data loading, reward evaluation, denoising, and training infrastructure remain shared.
For each base optimizer and task, we train the baseline and its \method-augmented variant with matched compute-relevant quantities, including $N_{\text{gen}}{=}256$, rollout budget, sequence-level group size, denoising schedule, and number of RL update steps.
The diffu-GRPO update steps are $3800/5000/7700/6600$ for Sudoku/Countdown/GSM8K/MATH500, respectively.
The SPG update steps are $4000/4600/4600/2600$ for Sudoku/Countdown/GSM8K/MATH500, respectively.
Run-specific settings are provided in the released configuration files.

\textbf{Other settings.}
Unless stated otherwise, \method\ uses lightweight branching settings: branch size $Z{=}2$ per selected intermediate state and one selected timestep per trajectory, $|T_{\text{sub}}|{=}1$.
Timesteps are sampled from a late-biased polynomial distribution $\omega(t)$ with exponent $k{=}4$.
We set $\alpha_{\mathrm{base}}{=}1$ and sweep the step-wise weight over $\alpha_{\text{step}}\in\{0.1,0.5,0.9\}$; run-specific choices are recorded in the released configuration files.

\textbf{SPG-specific settings.}
We implement SPG in the d1 codebase using the practical settings described in \citet{spg}.
For Monte Carlo estimation, we use the block-wise masking strategy: each sample randomly selects one block, keeps previous blocks mostly clean (with optional light perturbation on prompt/clean tokens), fully masks later blocks, and applies random masking within the selected block.
For negative-advantage traces, we use the mixture estimator from SPG, blending the EUBO and ELBO terms with coefficient $w$ when $A<0$.
We keep these SPG-specific choices fixed for both the SPG baseline and \method$_{\mathrm{SPG}}$.

\textbf{Optimization.}
Following \citet{zhao2025d1},
we use LoRA with a rank $r=128$ and scaling factor $\alpha = 64$, and AdamW optimizer~\cite{adamw}.
We basically set hyperparameters as follows: $\beta_1=0.9, \beta_2=0.99$, weight decay $0.1$, learning rate $3\times$ or $5 \times 10^{-6}$, gradient clipping $0.2$. 
We train on $4$ NVIDIA H100-94G GPUs with batch size 6 per GPU, gradient accumulation steps of 4, and sequence length of 256 tokens.

\section{Empirical Measurement of Step-level Gradient Variance}
\label{app:gradvar}

We detail the protocol used to produce Fig.~\ref{fig:gradvar}, which measures the variance of the step-level gradient estimator via repeated same-state branching.

\textbf{Quantity measured.}
Fix an intermediate diffusion state $s=(q,x)$ and draft size $Z$.
Let $\mathcal{L}_{\text{step}}(s;\theta)$ denote the \emph{unclipped} per-state step loss obtained by instantiating Eq.~\ref{eq:inter-loss-method} at state $s$ with $Z$ same-state branches, using the state-wise surrogate log-probability (Eq.~\ref{eq:subset-logprob-method}) and group advantages (Eq.~\ref{eq:group-adv}).
We define the step-level gradient estimator as the update direction
\[
\hat g_{\text{step}}(s)\;:=\;-\nabla_\theta \mathcal{L}_{\text{step}}(s;\theta)\in\mathbb{R}^d,
\]
We summarize the variance of this random vector by the trace covariance
\[
\mathrm{trCov}\bigl(\hat g_{\text{step}}(s)\bigr)
:=\mathrm{tr}\!\left(\mathrm{Cov}\bigl[\hat g_{\text{step}}(s)\bigr]\right)
=\mathbb{E}\!\left[\|\hat g_{\text{step}}(s)\|^2\right]-\left\|\mathbb{E}\!\left[\hat g_{\text{step}}(s)\right]\right\|^2.
\]
Empirically, for each fixed state $s$ we repeat same-state branching $R$ times to obtain $\{\hat g^{(r)}_{\text{step}}(s)\}_{r=1}^R$ and estimate
\[
\widehat{\mathrm{trCov}}\bigl(\hat g_{\text{step}}(s)\bigr)
=
\frac{1}{R-1}\sum_{r=1}^R \left\|\hat g^{(r)}_{\text{step}}(s)-\bar g_{\text{step}}(s)\right\|^2,
\qquad
\bar g_{\text{step}}(s)=\frac{1}{R}\sum_{r=1}^R \hat g^{(r)}_{\text{step}}(s).
\]

\textbf{Task, checkpoint, and state collection.}
We run this diagnostic on 4$\times$4 Sudoku for efficiency and use a fixed model checkpoint from DiSPO training (LLaDA optimized with DiSPO; step-3800).
We collect intermediate states from 64 prompts and sample 8 states per prompt, yielding 512 candidate states.

\textbf{State filtering and paired aggregation.}
To ensure a fair comparison across different conditions (e.g., different $Z$ and action-only vs full-token updates), we fix the same state set for all conditions and retain only non-degenerate states;
at the state (i) at least one actionable token contributes to the step loss (denominator $>0$), and
(ii) among the $R$ repeated branchings, at least one trial yields an advantage $>0$.
After filtering, we aggregate $\widehat{\mathrm{trCov}}(\hat g_{\text{step}}(s))$ over 276 states.

\textbf{Compared settings.}
$Z=2$ with action-only updates is the default setting used in our main experiments~(Table~\ref{tab:main-results}); 
Including that, for the $Z$ ablation, we test $Z\in\{2,3,4\}$; for the update-target ablation, we test the \emph{full-tokens} and \emph{action-only} variants.
Bars in Fig.~\ref{fig:gradvar} show the difference in gradient variance from the default ($Z=2$, action-only), with error bars indicating paired 95\% bootstrap CIs over states.

\section{Reference Comparison with Terminal Reward Shaping}\label{app:sapo}

\textbf{Difference in objectives.}
SAPO~\citep{sapo} and \method\ both leverage intermediate diffusion states, but they do so for different purposes.
SAPO scores intermediate states to \emph{shape the terminal rewards}, whereas \method\ directly optimizes \emph{intermediate-state actions}---the mask fillings sampled at a fixed partially-masked state---by comparing same-state counterfactual fillings under the unchanged terminal reward.
Hence, the two approaches are best viewed as complementary.

\textbf{Difference in compute.}
This difference in design also leads to substantially different compute profiles.
As summarized in Table~\ref{tab:compute-budget-sapo}, \method\ adds overhead mainly through extra reward evaluations for same-state branches and one-step surrogate evaluations for step-wise log-probabilities, while matching diffusion-rollout forward passes and update steps.
In contrast, SAPO typically incurs additional \emph{multi-step continuation rollouts} from intermediate states to estimate shaping terms, which increases the dominant cost---diffusion-rollout forward passes---often by a large margin.
Therefore, SAPO is not only complementary in approach, but can also be considerably more expensive in required rollout compute.

\begin{table}[h]
\centering
\caption{\textbf{Operation counts per prompt including SAPO.}
$K$: \# rollouts, $T$: \# denoising steps/rollout, $\mathcal{S}$: selected states (DiSPO), $Z$: \# branches/state,
$N_m$: \# MC prompt masks for $\mathbb{E}_m[\cdot]$, $U$: \# update steps.
For SAPO, $\mathcal{S}_{\mathrm{sapo}}$ is the set of intermediate states used for step-aware reward estimation,
$N_{\mathrm{sapo}}$ is the \# continuation rollouts per selected state, and $\bar T_{\mathrm{rem}}$ is the average remaining denoising steps.}
\label{tab:compute-budget-sapo}
\begin{tabular}{lccc}
\toprule
\textbf{Compute items} & \textbf{w/o \method} & \textbf{w/ \method} & \textbf{w/ SAPO} \\
\midrule
Diffusion rollout forward passes
& $KT$
& $KT$
& $KT + \roundboxorange{$N_{\mathrm{sapo}}|\mathcal{S}_{\mathrm{sapo}}|\bar T_{\mathrm{rem}}$}$ \\
Optimizer update steps
& $U$ & $U$ & $U$ \\
One-step surrogate (terminal logps)
& $2N_mK$ & $2N_mK$ & $2N_mK$ \\
\midrule
Reward evaluations
& $K$
& $K + \roundboxblue{$|\mathcal{S}|Z$}$
& $K + \roundboxorange{$N_{\mathrm{sapo}}|\mathcal{S}_{\mathrm{sapo}}|$}$ \\
One-step surrogate (step-wise logps)
& $0$
& \roundboxblue{$2N_m|\mathcal{S}|$}
& $0$ \\
\bottomrule
\end{tabular}
\end{table}

\textbf{Reference comparison in benchmarks.}
Despite the \emph{mismatch} in objectives and compute, SAPO is a representative step-aware method, so we include a \emph{reference} comparison to contextualize \method.
\textbf{\emph{This comparison should be interpreted with caution}}: the methods optimize different training signals (shaped vs.\ unshaped terminal rewards) and can require very different rollout compute (Table~\ref{tab:compute-budget-sapo}), so the evaluation is not an apples-to-apples matched-budget study.
Table~\ref{tab:main-results} reports that, on average, \method\ outperforms SAPO, despite using substantially less rollout compute.

\textbf{Takeaway.}
These results suggest that, among ways to leverage intermediate diffusion states, \emph{directly optimizing intermediate mask-filling actions} can be a particularly effective alternative to using intermediate-state scores only for reward shaping, achieving competitive performance \emph{at a fraction of the rollout compute}.
At the same time, since SAPO and \method\ act on different axes (reward shaping vs.\ action-level optimization), they are potentially complementary; we leave a careful study of joint training recipes to future work.

\section{Qualitative Examples of Premature Commitments}
\label{app:qual-sudoku}

Figure~\ref{fig:qual-sudoku-case} shows representative Sudoku completions from the final LLaDA-8B-Instruct checkpoint with $N_{\text{gen}}{=}128$.
The two methods are compared on the same instance at the same denoising step.
diffu-GRPO has already committed to a constraint-violating fill, whereas \method\ maintains a consistent partial assignment.

\begin{figure}[h]
  \centering
  \begin{subfigure}[b]{0.2\textwidth}
    \centering
    \includegraphics[width=\linewidth]{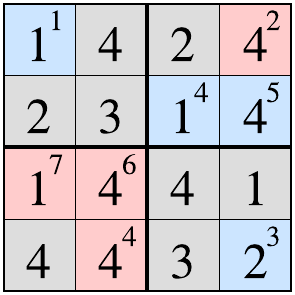}
    \caption{diffu-GRPO}
    \label{fig:qual-sudoku-grpo}
  \end{subfigure}
  \hspace{4mm}
  \begin{subfigure}[b]{0.2\textwidth}
    \centering
    \includegraphics[width=\linewidth]{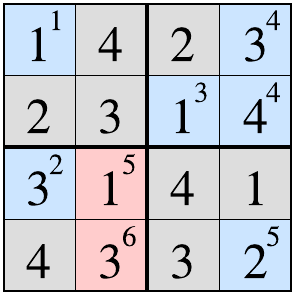}
    \caption{\method}
    \label{fig:qual-sudoku-dispo}
  \end{subfigure}

  \caption{\textbf{Qualitative example of premature commitment on Sudoku.}
  Gray cells are inputs; red and blue cells denote incorrect and correct fillings, respectively, with superscripts indicating fill order.
  diffu-GRPO makes an early constraint-violating fill, while \method\ keeps the partial assignment consistent at the same denoising step.}
  \label{fig:qual-sudoku-case}
\end{figure}

\section{Discussion on the Wall-clock Overhead}
\label{app:wallclock}

\textbf{Observed overhead.}
In our current implementation, \method\ runs at lower throughput than the terminal-feedback baseline (reported in \S~\ref{sec:exp:compute}) even at modest settings (e.g., $Z{=}2$).
This slowdown reflects engineering overhead rather than additional rollout/update compute, since the dominant algorithmic costs ($KT$ rollout forward passes and $U$ updates) are matched.

\textbf{Likely causes.}
The main contributors are repeated, non-amortized calls: (i) evaluating multiple same-state branches and (ii) computing step-wise log-probs via one-step surrogate calls at selected states.
When executed largely sequentially, these introduce frequent kernel launches, synchronization points, and host-side overhead that can dominate wall-clock time despite modest extra FLOPs.

\textbf{Optimization opportunities.}
These costs are not intrinsic to the method and should be reducible by (1) batching same-state branches (amortizing policy/surrogate/reward calls over $Z$), (2) caching state representations for step-wise terms, and (3) reducing kernel/IO overhead (e.g., call fusion / compilation and fewer synchronization points).
We therefore, as a reference, additionally reported wall-clock-matched curves (Fig.~\ref{fig:wallclock}) to ensure the performance gains (rewards and accuracy) are robust to current implementation overheads.

\textbf{Relation to inference-time decoding efficiency.}
This discussion concerns training-time optimization overhead.
It is orthogonal to inference-time MDLM acceleration methods, which reduce \emph{inference cost} by shortening denoising schedules, reusing/skipping computation across denoising steps, or early stopping~\citep{luxembourg2025plan,israel2025accelerating,wei2025accelerating,ma2025dkv,liu2025dllm,wu2025fast,oba2026stopping}.
In contrast, \method\ targets \emph{training-time} credit assignment by reusing rollout-cached logits to compare fixed-state mask fillings.

\section{Same-state resampling yields sparse but real signal}\label{app:diag:inter}
Our diagnostic, Table~\ref{tab:diag-state} shows that same-state resampling provides a nontrivial learning signal, although the signal density is highly non-uniform across timesteps. 
Under the conservative setting of uniform timestep sampling, \textbf{17.8\%} selected intermediate states already produce different rewards across same-state branches, and therefore provide usable supervision.

More importantly, this signal is highly concentrated in later denoising states. 
This helps explain the pattern in Table~\ref{tab:ablation} (Sec. 5.4): uniform sampling already provides a useful signal, while late-focused sampling performs better because it concentrates updates where reward-distinguishable alternatives are much more common.

\begin{table}[h]
\centering
\caption{\textbf{Fraction of reward-distinguishable intermediate states}; 64 prompts, 6 generations per prompt, 8 states per trajectory, 128 sampling steps. Timestep bins are defined over normalized denoising progress.}
\label{tab:diag-state}
\begin{tabular}{@{}lc@{}}
\toprule
\textbf{Timestep bin} & Reward-distinguishable states (\%) \\
\midrule
early $[0.0,0.3)$    & 4.5 \\
mid $[0.3,0.7)$      & 9.2 \\
late $[0.0,1.0]$     & 43.8 \\
\midrule
overall & 17.8 \\
\bottomrule
\end{tabular}
\end{table}

\section{Stability diagnosis of intermediate branching}\label{app:diag:stability}

For each checkpoint at global step $s\in\{900,1900,3800\}$, we resumed training and replayed
$\texttt{outer\_batches}=8$ with $\mu=12$ inner updates per outer batch (i.e., steps $s\!\rightarrow\!s+96$).
At every inner update, we computed branch-level $|\log\rho|$ separately for Terminal and Step losses:
Terminal uses completion-token likelihood ratios, while Step uses action-token likelihood ratios only.
Token-level log-ratios were aggregated within each branch by mean, then pooled over all inner updates.

Table~\ref{tab:ratio_stability_steps} reports pooled median, pooled p99, and clip fraction.
Sample sizes are $N_{\mathrm{term}}=9216$ for all rows, and $N_{\mathrm{step}}=15480$ ($900\!\rightarrow\!996$), $15584$ ($1900\!\rightarrow\!1996$), $15160$ ($3800\!\rightarrow\!3896$). 
For Terminal, the pooled sample size is $N_{\mathrm{term}}=\texttt{outer\_batches}\times\mu\times G\times B_{\text{dev}}\times A =8\times12\times4\times6\times4=9216$, where $G$ is the number of GPUs, $B_{\text{dev}}$ is per-device train batch size, and $A$ is gradient accumulation steps.
Equivalently, Step keeps one scalar $|\log\rho|$ per branch after token-mean aggregation over the
effective mask ($Z=2$).

\begin{table}[t]
\centering
\caption{\textbf{Stability diagnostics of cached-logit reuse on Sudoku.}}
\begin{tabular}{l ccc ccc}
\toprule
\multirow{2}{*}{Global Steps} & \multicolumn{3}{c}{Terminal} & \multicolumn{3}{c}{Step} \\
\cmidrule(lr){2-4}\cmidrule(lr){5-7}
& median & p99 & clip (\%) & median & p99 & clip (\%) \\
\midrule
900$\rightarrow$996   & 0.0065 & 0.0480 & 0.011 & 0.0053 & 0.0455 & 0.000 \\
1900$\rightarrow$1996 & 0.0071 & 0.0431 & 0.013 & 0.0046 & 0.0425 & 0.003 \\
3800$\rightarrow$3896 & 0.0065 & 0.0377 & 0.016 & 0.0048 & 0.0418 & 0.009 \\
\bottomrule
\end{tabular}
\label{tab:ratio_stability_steps}
\end{table}

\section{Existing Assets and Licenses}
\label{app:licenses}

Table~\ref{tab:asset-licenses} summarizes the existing assets used in our experiments.
We use these assets only for research evaluation and credit the original creators through citations and source links.
For assets whose license is not explicitly specified on the source page, we mark the license as ``not specified'' and refer readers to the original source.

\begin{table}[h]
\centering
\small
\caption{\textbf{Existing assets used in this work.}
Licenses are reported according to the source pages available at the time of writing.}
\label{tab:asset-licenses}
\begin{tabular}{@{}lll@{}}
\toprule
\textbf{Asset} & \textbf{Use} & \textbf{License / terms} \\
\midrule
LLaDA-8B-Instruct~\citep{nie2025largelanguagediffusionmodels} 
& Base model 
& MIT \\

d1 / diffu-GRPO code~\citep{zhao2025d1} 
& Training recipe, rewards, baselines 
& Apache-2.0 \\

s1K~\citep{s1k} 
& SFT data 
& Apache-2.0 \\

GSM8K~\citep{gsm8k} 
& Math benchmark 
& MIT \\

MATH500~\citep{lightman2023lets} 
& Math benchmark 
& MIT source split via PRM800K \\

Countdown~\citep{zhao2025d1} 
& Planning benchmark 
& Not specified on source page \\

4$\times$4 Sudoku dataset 
& Planning benchmark 
& Not specified on source page \\
\bottomrule
\end{tabular}
\end{table}

\end{document}